\def\eqref#1{equation~\ref{#1}}
\def\floor#1{\lfloor #1 \rfloor}
\def\1{\bm{1}}
\def\eps{{\epsilon}}
\def\vc{{\bm{c}}}
\DeclareMathAlphabet{\mathsfit}{\encodingdefault}{\sfdefault}{m}{sl}
\SetMathAlphabet{\mathsfit}{bold}{\encodingdefault}{\sfdefault}{bx}{n}
\newcommand{\R}{\mathbb{R}}
\Crefname{equation}{Eq.}{Eqs.}
\Crefname{figure}{Fig.}{Figs.}
\Crefname{tabular}{Tab.}{Tabs.}
\Crefname{theorem}{Thm.}{Thms.}
\Crefname{proposition}{Prop.}{Props.}
\Crefname{definition}{Def.}{Defs.}
\Crefname{appendix}{App.}{Apps.}
\newcommand{\rebuttal}[1]{{\color{black} #1}}
\newcommand{\camready}[1]{{\color{black} #1}}
\title{A Poincar\'e Inequality and Consistency Results for Signal Sampling on Large Graphs}
\author{Thien Le  \\
MIT\\
\texttt{thienle@mit.edu} \\
\And
Luana Ruiz \\
Johns Hopkins University \\
\texttt{lrubini1@jh.edu} \\
\And
Stefanie Jegelka \\
TU Munich, MIT \\
\texttt{stefje@mit.edu}
}
\definecolor{darkblue}{rgb}{0.0,0.0,0.65}
\definecolor{darkred}{rgb}{0.68,0.05,0.0}
\definecolor{darkgreen}{rgb}{0.0,0.29,0.29}
\definecolor{darkpurple}{rgb}{0.47,0.09,0.29}
\begin{document}

\maketitle

\begin{abstract}
Large-scale graph machine learning is challenging as the complexity of learning models scales with the graph size. Subsampling the graph is a viable alternative, but sampling on graphs is nontrivial as graphs are non-Euclidean. Existing graph sampling techniques require not only computing the spectra of large matrices but also repeating these computations when the graph changes, e.g., grows. In this paper, we introduce a signal sampling theory for a type of graph limit---the graphon. We prove a Poincar\'e inequality for graphon signals and show that complements of node subsets satisfying this inequality are unique sampling sets for Paley-Wiener spaces of graphon signals. Exploiting connections with spectral clustering and Gaussian elimination, we prove that such sampling sets are consistent in the sense that unique sampling sets on a convergent graph sequence converge to unique sampling sets on the graphon. We then propose a related graphon signal sampling algorithm for large graphs, and demonstrate its good empirical performance on graph machine learning tasks.
\end{abstract}

\section{Introduction} \label{sec:intro}

Graphs are ubiquitous data structures in modern data science and machine learning. 
Examples range from social networks \citep{kempe03socialnetwork, barabasi2000scalefree} and recommender systems \citep{ying2018kdd} to drug interactions \citep{zitnik2018bioinformatics} and protein folding \citep{alphafold},
in which the graph can have 
tens of thousands to millions of nodes and edges \citep{takac2012largegraph}. 
The ability to sense systems at this scale presents unprecedented opportunities for scientific and technological advancement. However, it also poses challenges, as traditional algorithms and models may need to scale more efficiently to large graphs, including neural graph learning methods.

However, the large size of modern graphs does not necessarily indicate the degree of complexity of the problem. In fact, many graph-based problems have low intrinsic dimensions. For instance, the `small-world phenomenon'~\citep{kleinberg2000smallworld} observes that any two entities in a network are likely connected by a short sequence of intermediate nodes. Another example are power-law graphs, where there are few highly connected influencers and many scattered nodes \citep{barabasi2000scalefree}. 

At a high level, this paper studies how to exploit these simplicities in large graphs to design scalable algorithms with theoretical guarantees. In particular, we combine two ideas: \emph{graph limits}, which are used to approximate large, random graphs;
and \emph{sampling theory}, which studies the problem of representing (graph) signals using the smallest possible subset of data points (nodes), with the least possible loss of information. We then illustrate how to use the resulting sampling techniques to compress graphs for GNN training and to compute faster, subsampled positional encodings.

\textbf{Graphons and graph limits.} Leveraging continuous limits to analyze large discrete data is helpful because limits often reveal the intrinsic dimension of the data. E.g., in Euclidean domain, the Fourier transform (FT) of a continuous signal is easier to analyze than the FT of its discrete counterpart, which is periodic and may exhibit aliasing.  We propose to study the graph signal sampling problem on a graph limit called graphon. Graphons can be thought of as undirected graphs with an uncountable number of nodes, and are both random graph models and limits of large dense graphs \citep{borgs2008convergent,lovasz2012large}.


\paragraph{(Graph) signal sampling.} Sampling theory is a long-standing line of work with deep roots in signal processing. Traditionally, sampling seeks to answer the fundamental question: if one can only observe discrete samples of an analog (continuous) signal, under what conditions can the analog signal be perfectly reconstructed? On a \emph{graph} 
on $n$ nodes, \emph{signals} are vectors $\bbx \in \reals^{n}$ that map each node to some value. The \emph{graph signal sampling problem} is then defined as follows. 

\begin{problem} \label{prob:graph_sampl}
    For some signal space $\ccalX$ of interest, find subsets $S$ of nodes such that if $\bbx, \bbx' \in \ccalX$ and  $x_i = x_i'$ for all $i \in S$ then $x_j = x_j'$ for all other nodes. Thus, such a set can \textit{uniquely represent} any signals in $\ccalX$ and is called a \emph{uniqueness set} for $\ccalX$.
\end{problem} 

Problem \ref{prob:graph_sampl} was first studied by \cite{pesenson2008graphsampling}, who introduced Paley-Wiener (PW) spaces for graph signals, defined graph uniqueness sets, and derived a Poincar\'e inequality for discrete graph signals that allows recovering such uniqueness sets. These definitions are reviewed in Section \ref{sec:prelim}. Graph signal sampling theory subsequently found applications in the field of graph signal processing (GSP) \citep{shuman13-mag,ortega2018graph}, with \cite{chen2015discrete} describing how sampling sets can be obtained via column-wise Gaussian elimination of the eigenvector matrix. 

\textbf{Current limitations.} Though widely used, \cite{chen2015discrete}'s approach requires expensive spectral computations. Several methods, briefly discussed in \Cref{sec:rel_work}, have been proposed to circumvent these computations; however, these approaches still present stringent tradeoffs between complexity and quality of approximation on very large graphs. Perhaps more limiting, the discrete sampling sets yielded by these methods are no longer applicable if the graph changes, as often happens in large real-world network problems, e.g., an influx of new users in a social network.

\crefformat{footnote}{#2\footnotemark[#1]#3}

\paragraph{Contributions.}
%
%
To address the abovementioned issues, we propose sampling uniqueness sets on the limit graphon. By solving a single sampling problem at the graph limit (graphon), we obtain a uniqueness set that generalizes to any large finite graphs in a sequence converging to the limit graphon. We provide both theoretical guarantees and experiments to verify this generalization in downstream graph-based tasks. In summary, our contributions are:
\begin{enumerate}[leftmargin=*]
  \item Motivated by \citet{pesenson2008graphsampling}, we formulate signal sampling over a graphon and study traditional sampling theory notions such as Paley-Wiener spaces and uniqueness sets\footnote[1]{\label{fn1}\camready{Similar sampling theory was also concurrently developed by 
  \citet{paradamayorga2024sampling}}.} in a Euclidean setting of $L^2([0,1])$ while still incorporating graph 
  structure into the sampling procedure.
  \item We prove a Poincar\'e inequality for graphons and relate bandlimitedness in graphon signal space to optimal sampling sets\cref{fn1}. This generalizes previous results on finite graphs and rigorously answers a reconstruction question. Unlike other results for graphon signal processing in the literature, we do not require any continuity or smoothness assumption on the graphon. 
  \item 
  We uncover a connection between graphon sampling and kernel spectral clustering and design a Gaussian-elimination-based algorithm to sample from the graphon uniqueness set with provable consistency
  , using an argument from \citep{schiebinger2015geometry}.
  \item We empirically evaluate our sampling method on two tasks: (1) transferability: training a GNN on subsampled graphs and testing on the full graph; (2) accelerating the computation of positional encodings for GNNs by restricting them to a sampled subset of nodes.
  
\end{enumerate}

\subsection{Related Work} \label{sec:rel_work}

\noindent \textbf{Graphons in machine learning.} In machine learning, graphons have been used for network model estimation \citep{borgs2015private}, hierarchical clustering \citep{eldridge2016graphons} and to study the theoretical properties of graph neural networks (GNNs) on large graphs. Specifically, \cite{ruiz2019graphon} have shown that graph convolutions converge to graphon convolutions, further proving a non-asymptotic result that implies that GNNs are transferable across graphon-sampled graphs \citep{ruiz20-transf}. 
Similar studies have been done using graphops \citep{le2023limits}, which are very general graph limits that range from graphons to very sparse graphs. Graphons have also been used to show convergence of GNN training on increasing graph sequences \citep{cervino2021increase}, to prove PAC-Bayes bounds for GNN learning \citep{maskey2022generalization}, and to study the learning dynamics of wide large-graph NNs \citep{krishna2023gtnk}.

\noindent \textbf{Graph signal sampling.}~Graph signal sampling has been studied at length in GSP. \cite{chen2015discrete} describe how sampling sets can be obtained via column-wise Gaussian elimination of the eigenvector matrix and derive conditions for perfect reconstruction. Noting that this approach requires expensive spectral computations, several methods were proposed to avoid them. E.g., \citet{anis2016efficient} calculate eigenvalue and eigenvector approximations using power iteration; \citet{marques2015sampling} compute $n$ signal aggregations at a single node $i$ to construct an $n$-dimensional local signal from which $K$ elements are sampled; and \citet{chamon2017greedy} do greedy sampling and provide near optimal guarantees when the interpolation error is approximately supermodular. 

\noindent \textbf{Connections with other sampling techniques.} The sampling algorithm we propose is based on a greedy iterative procedure that attempts to find the signal with the lowest total variation on the complement of the current sampling set $S$, and adds the node corresponding to the largest component in this signal to $S$. This heuristic is derived by trying to maximize the largest eigenvalue of the normalized Laplacian restricted to $S$ (see \citep[Section IV.C]{anis2016efficient} for a detailed discussion). Thus, our algorithm has close connections with E-optimal design, which minimizes the largest eigenvalue of the pseudo-inverse of the sampled matrix \citep{pukelsheim2006optimal}, and with dual volume sampling \citep{avron13,li2017polynomial}, which provides approximation guarantees for E-optimal sampling. This type of objective also appears in effective resistance/leverage scores sampling \citep{ma2014statistical,rudi2018fast}, which is used for graph sparsification \citep{spielman2008graph}.

\camready{

Recent work by \citet{paradamayorga2024sampling}, concurrent with ours, also generalized PW spaces and uniqueness sets to graphons. Similarly, they proved a Poincaré inequality and proposed a sampling algorithm for graphon signals. Their main results quantitatively compare the Poincaré constant across different graphon-signal spaces, implying convergence of this constant for a convergent graph sequence, unlike our work, which analyzes consistency of sampling via spectral clustering.
}


\section{Preliminaries} \label{sec:prelim}

\subsection{Graph Signal Processing} \label{sbs:gsp}

\textbf{Setup.} We consider \emph{graphs} $\bbG = (\ccalV, \ccalE)$ with $n$ nodes and edges $\ccalE \subseteq \ccalV \times \ccalV$. We write a graph's \emph{adjacency matrix} as $\bbA \in \reals^{n \times n}$; its \emph{degree matrix} as $\bbD = \rebuttal{{\rm diag}(\bbA\boldsymbol{1})}$; and its \emph{Laplacian matrix} as $\bbD-\bbA$. We also consider the normalized adjacency and Laplacian matrices $\bar{\bbA} = (\bbD^{\dag})^{1/2}\bbA(\bbD^{\dag})^{1/2}$ and $\bar{\bbL} = \bbI - \bar{\bbA}$ (where $\cdot^{\dag}$ is the pseudoinverse), with eigendecomposition $\bar{\bbL} = \bbV\bbLam\bbV^T$ and eigenvalues $\lambda_1 \leq \ldots \leq \lambda_n$. 
We further consider \emph{node signals} $\bbx \in \reals^n$, which assign data value $x_i$ to node $i$; e.g., in a social network, $x_i$ may represent the political affiliation of person $i$. 

\textbf{Total variation and graph frequencies.} 
The total variation of a graph signal is defined  as $\mbox{TV}(\bbx) = \bbx^T\bar{\bbL}\bbx$ \citep{anis2016efficient,sandryhaila14-freq}. This allows interpreting the eigenvalues $\lambda_i$ as the graph's essential frequencies, with oscillation modes given by the eigenvectors $\bbv_i = [\bbV]_{:i}$. 

\textbf{Graph FT and Paley-Wiener spaces.} We may analyze signals on the graph frequency domain via the \emph{graph Fourier transform (GFT)}. The GFT $\hbx$ of $\bbx$ is its projection onto the Laplacian eigenbasis 
$
\hbx = \bbV^T \bbx 
$ \citep{sandryhaila14-freq}.
The GFT further allows defining bandlimited graph signals, or, more formally, \emph{Paley-Wiener (PW) spaces}. On $\bbG$, the PW space with cutoff frequency $\lambda$ is defined as $PW_\lambda(\bbG) = \{\bbx \mbox{ s.t. } [\hbx]_i = 0 \mbox{ for all } \lambda_i>\lambda\}$ \citep{anis2016efficient, pesenson2008graphsampling}.

\textbf{Uniqueness sets.} When $\ccalX$ is a PW space $PW_\lambda(\bbG)$ with $\lambda \leq \lambda_K$ for some $K < n$, there exists a subset of at most $K$ nodes that perfectly determine any signal in $\ccalX$ called \emph{uniqueness set}.  The following theorem from \citep{anis2016efficient} gives conditions under which a proposed subset $\ccalS$ is a uniqueness set for $PW_\lambda(\bbG)$. 

\begin{theorem}[Uniqueness sets for $PW_\lambda(\bbG)$]\label{thm:anis_uniqueness_set}
Let $\ccalS \subseteq \ccalV$. 
Let $\bbV_{K} \in \R^{n \times K}$ denote the first $K$ columns of the eigenvector matrix $\bbV$ and $\bbPsi_S \in \R^{K \times K}$ be the submatrix of $\bbV$ with rows indexed by $S$. If $\mbox{rank}\bbPsi_{\mathcal{S}} = K$
, then $\ccalS$ is a uniqueness set for $PW_\lambda(\bbG)$ for all $\lambda \leq \lambda_K(\bbG)$. 
If $\lambda_K \leq \lambda < \lambda_{K+1}$ then $\mbox{rank}\bbPsi_{\mathcal{S}}=K$ is also necessary.
\end{theorem}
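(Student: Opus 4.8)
The plan is to reduce the combinatorial ``uniqueness set'' condition to an injectivity statement about a linear restriction map, and then translate that into the rank condition on $\bbPsi_S$ via the eigenvector representation of Paley--Wiener signals. First I would record the elementary reformulation: by linearity of the GFT and of restriction to $S$, a set $\ccalS$ fails to be a uniqueness set for $\ccalX = PW_\lambda(\bbG)$ precisely when there is a nonzero $\bbx \in \ccalX$ with $x_i = 0$ for all $i \in S$ (apply the definition to the difference $\bbx - \bbx'$ of two signals that agree on $S$). Thus ``$\ccalS$ is a uniqueness set'' is equivalent to ``the restriction map $\bbx \mapsto (x_i)_{i \in S}$ is injective on $PW_\lambda(\bbG)$.''

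Next I would invoke the spectral description $PW_\lambda(\bbG) = \operatorname{span}\{\bbv_i : \lambda_i \le \lambda\}$. The key containment is that for $\lambda \le \lambda_K$ every bandlimited signal can be written $\bbx = \bbV_K \bbc$ for some $\bbc \in \R^K$, so $PW_\lambda(\bbG) \subseteq \operatorname{col}(\bbV_K)$; and in the sharper regime $\lambda_K \le \lambda < \lambda_{K+1}$ one has equality $PW_\lambda(\bbG) = \operatorname{col}(\bbV_K)$, a space of dimension exactly $K$. Under this parametrization the restriction to $S$ becomes $\bbx|_S = (\bbV_K)_{S,:}\,\bbc = \bbPsi_S \bbc$, so the restriction map is, up to the change of coordinates $\bbc \mapsto \bbV_K \bbc$, just multiplication by $\bbPsi_S$.

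For sufficiency ($\lambda \le \lambda_K$): if $\operatorname{rank}\bbPsi_S = K$, then the $K \times K$ matrix $\bbPsi_S$ is invertible, so $\bbPsi_S \bbc = 0$ forces $\bbc = 0$ and hence $\bbx = \bbV_K \bbc = 0$; thus any bandlimited signal vanishing on $S$ is zero and $\ccalS$ is a uniqueness set. For necessity ($\lambda_K \le \lambda < \lambda_{K+1}$), where $PW_\lambda(\bbG) = \operatorname{col}(\bbV_K)$, I would argue the contrapositive: if $\operatorname{rank}\bbPsi_S < K$ there is a nonzero $\bbc$ with $\bbPsi_S \bbc = 0$; the signal $\bbx = \bbV_K \bbc$ then lies in $PW_\lambda(\bbG)$, is nonzero because orthonormality of the columns of $\bbV_K$ gives $\|\bbx\| = \|\bbc\| > 0$, and vanishes on $S$, so $\ccalS$ is not a uniqueness set.

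The step requiring the most care is pinning down exactly when $PW_\lambda(\bbG)$ equals $\operatorname{col}(\bbV_K)$ rather than merely being contained in it, which is governed by the multiplicities of the Laplacian eigenvalues at the cutoff. The strict inequality $\lambda < \lambda_{K+1}$ in the necessity half is precisely what rules out a spectral tie at the boundary; such a tie would make $\dim PW_\lambda(\bbG) > K = |S|$ and defeat any $K$-element sampling set, which is why the two frequency regimes must be handled separately rather than by a single uniform argument.
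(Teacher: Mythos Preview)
Your argument is correct and is the standard linear-algebra proof of this fact. Note, however, that the paper does not actually prove Theorem~\ref{thm:anis_uniqueness_set}: it is stated in Section~\ref{sbs:gsp} as a preliminary result quoted from \citet{anis2016efficient}, with no proof given. So there is no ``paper's own proof'' to compare against; your write-up is essentially the argument one finds in that reference, namely the observation that $PW_\lambda(\bbG)\subseteq\operatorname{col}(\bbV_K)$ for $\lambda\le\lambda_K$ (with equality when $\lambda_K\le\lambda<\lambda_{K+1}$), so that uniqueness on $\ccalS$ is equivalent to injectivity of $\bbc\mapsto\bbPsi_S\bbc$.

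One small caveat worth making explicit: the sufficiency claim ``for all $\lambda\le\lambda_K$'' tacitly requires $\lambda_K<\lambda_{K+1}$ at the endpoint $\lambda=\lambda_K$, since otherwise $PW_{\lambda_K}(\bbG)$ has dimension strictly larger than $K$ and cannot be contained in $\operatorname{col}(\bbV_K)$. You allude to this in your final paragraph for the necessity half, but the same eigenvalue-tie issue affects the endpoint of the sufficiency range as well; the cleanest fix is to state sufficiency for $\lambda<\lambda_{K+1}$ (which subsumes $\lambda\le\lambda_K$ when there is a gap) or to note the implicit simple-spectrum assumption.
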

In addition to providing a sufficient condition to verify if a set is a uniqueness set for some PW space, this theorem suggests a two-step strategy for obtaining such sets: first compute $\bbV_K$, and then design a sampling method that outputs $\ccalS$ such that $\mbox{rank}\bbPsi_S = K$.
However, these sampling strategies, e.g., the one suggested by \Cref{thm:anis_uniqueness_set}, can be limiting on large graphs as they require computing the eigendecomposition of a large matrix. 

\subsection{Graphon Signal Processing} \label{sbs:wsp}

\textbf{Graphons and graphon signals.} A \emph{graphon} is a symmetric, bounded, measurable function $\bbW: \Omega \times \Omega \to [0,1]$, where $\Omega$ is a general measurable space \citep{borgs2017graphons}. We assume that there exists an invertible 
map $\beta: \Omega \to [0,1]$ and w.l.o.g.,
we can also write $\bbW: [0,1]^2 \to [0,1]$. 
Graphons are only defined up to 
a bijective measure-preserving map, similar to how finite graphs are defined up to 
node permutations. 
Graphons are limits of graph sequences $\{\bbG_n\}$ in the so-called homomorphism density sense \citep{borgs2008convergent}, and can also be seen as random graph models where nodes $u_i$, $u_j$ are sampled from $\Omega$ 
and edges $(u_i,u_j) \sim {\rm{Bernoulli}}(\bbW(u_i,u_j))$. 
 \rebuttal{Graphons can also be motivated via infinite exchangeable graphs \citep{hoover1979relations,aldous1981representations}.}

\emph{Graphon signals} are functions $X:[0,1] \to \reals$. They represent data on the ``nodes'' of a graphon, i.e., $X(u)$ is the value of the signal at node $u \in [0,1]$ \citep{ruiz2020graphonsp}. Since two graphons that differ on a set of Lebesgue measure $0$ are identified, so are graphon signals. We restrict attention to finite-energy signals $X \in L^2([0,1])$.

\textbf{Graphon Laplacian and FT.} Given a graphon $\bbW$, its \emph{degree function} is $\smash{\bbd(v) = \int_0^1 \bbW(u,v) \D u}$. Define the normalized graphon $\smash{\bar{\bbW}(u,v) = {\bbW(u,v})/{\sqrt{\bbd(u) \bbd(v)}}}$ if $\bbd(u),\bbd(v) \neq 0$ and $0$ otherwise. Given a graphon signal $X$, we define the \emph{normalized graphon Laplacian}:
\begin{equation}\label{eq:graphon_lap}
    \bar{\ccalL} X  = X -\int_0^1 \bar{\bbW}(u,\cdot) X(u) \D u.
\end{equation}
The \emph{spectrum} of $\bar{\ccalL}$ consists of at most countably many nonnegative eigenvalues with finite multiplicity in $[0,2]$. Its essential spectrum consists of at most one point $\{1\}$, and this is also the only possible accumulation point. We enumerate the eigenvalues as $0 \leq \lambda_1 \leq \lambda_2 \leq \ldots \leq 2$. The corresponding set of eigenfunctions $\{\varphi_i\}_{i \in \mbZ \backslash \{0\}}$ forms an orthonormal basis of $L^2([0,1])$; see \Cref{app:graphon_functional_analysis}.

We define the graphon Fourier transform (WFT) of signal $X$ as the projection 
\begin{equation}\label{eq:projection_operator_on_eigenbasis}
    \hat{X}(\lambda_i) = \int_0^1 X(u) \varphi_i(u) \D u
\end{equation}
for all $i$. Note that this is different from the WFT defined in \citep{ruiz2019graphon}, which corresponds to projections onto the eigenbasis of a different but related linear operator.
\section{Sampling theory for graphons} \label{sec:problem}

We generalize the graph sampling problem studied in \cite{pesenson2008graphsampling} to a graphon sampling problem. The sampling procedure returns a (Lebesgue) measurable subset $U \subseteq [0,1]$. Intuitively, we would like to choose a set $U$ such that sampling from $U$ gives us the most information about the whole signal over $[0,1]$. These are called uniqueness sets. Similar to finite graphs, when the graphon signals have limited bandwidth, there exist nontrivial (other than $U = [0,1]$) uniqueness sets. Finding these sets 
is the main focus of the sampling theory for graphons that we develop here.  

For an arbitrary bandwidth cutoff $\lambda > 0$, we use the normalized graphon Laplacian (\ref{eq:graphon_lap}) 
with eigenvalues $ 0 \leq \lambda_1 \leq \lambda_2 \leq \ldots \leq \lambda_{-2} \leq \lambda_{-1} \leq 2$. First, we define the Paley-Wiener space: 
\begin{definition}[Graphon signal $PW_\lambda(\bbW)$ space]
  The Paley-Wiener space associated with $\lambda \in [0,1]$ and graphon $\bbW$, denoted $PW_\lambda(\bbW)$, is the space of graphon signals $X: [0,1] \to \R$ such that $\hat{X}(\lambda_i) = 0$ for all $\lambda_i > \lambda$, where $\hat{X}$ is the projection operator defined in \Cref{eq:projection_operator_on_eigenbasis}.
\end{definition}
The definition of $PW_\lambda(\bbW)$ depends on the underlying limit graphon through the projection operator~(\ref{eq:projection_operator_on_eigenbasis}), in particular the positions of its Laplacian eigenvalues. 
When $\lambda \geq \lambda_{-1}$, $PW_\lambda$ is all of $L^2([0,1])$ as the definition above is vacuously satisfied. Decreasing $\lambda$ induces some constraints on what functions are allowed in $PW_\lambda$. At $\lambda = 0$, $PW_0 = \{0\}$ contains only the trivial function. 

For any 
signal space
$\ccalH \subseteq L^2([0,1])$, we further define graphon uniqueness sets:
\begin{definition}[Graphon uniqueness set] \label{def:graphon_uniqueness}
A measurable $U\subseteq [0,1]$ is a uniqueness set for the signal space $\ccalH \subseteq L^2([0,1])$ if, for any $X,Y \in \ccalH$, $\int_U |X(u)-Y(u)|^2du = 0$ implies $\|X-Y\|^2_{L^2([0,1])}=0$. 
\end{definition}

Since $U = [0,1]$ is a trivial uniqueness set for any $\mathcal{H} \subseteq L^2([0,1])$, we are mainly interested in the interplay between the bandwidth cutoff $\lambda$ in $PW_\lambda(\bbW)$, and its corresponding non-trivial uniqueness sets. More precisely, we study the question\rebuttal{:}


\begin{problem}\label{qn:find_uniqueness}
    Assume that a graphon signal comes from $PW_\lambda(\bbW)$ for some $\lambda$ and $\bbW$. Is there an algorithm that outputs a uniqueness set $U(\lambda, \bbW)$?
\end{problem}
    
We answer this question in the positive and provide two approaches. First, by generalizing results by \citet{pesenson2008graphsampling} for finite graphs, we give a graphon Poincar\'e inequality (\Cref{thm:poincare}) for nontrivial measurable subsets of $[0,1]$. Then, in \Cref{thm:lam_uniqueness}, we show that if a set $S$ satisfies the Poincar\'e inequality with constant $\Lambda > 0$ then the complement $U = [0,1] \backslash S$ is a uniqueness set for $\rm{PW}_{1/\Lambda}(\bbW)$ (\Cref{thm:lam_uniqueness}).  Thus, we can find uniqueness set $U$ by first finding an $S$ that satisfies the Poincar\'e inequality with constant $1/\lambda$. 

The second approach is more direct: the analogous question for finite graphs admits a straightforward answer using Gaussian elimination (see the discussion underneath \Cref{thm:anis_uniqueness_set}). However, in the limit of infinitely many nodes, it does not make sense to perform Gaussian elimination as is. Instead, we form a sequence of graphs $\{\bbG_n\}$ that converges to the prescribed graphon $\bbW$. We then prove, using techniques from \citep{schiebinger2015geometry}, that performing Gaussian elimination with proper pivoting for $\bbG_n$ recovers sets that converge to a uniqueness set for $\rm{PW}_{\lambda}(\bbW)$ (\Cref{thm:consistency_small}). 
\rebuttal{Finally, we 
implement and analyze this approach empirically in \Cref{sec:num}.} 




\section{Main Results} \label{sec:main}

\subsection{Poincar\'e inequality and bandwidth of uniqueness set} \label{sbs:poincare}
We start with the first approach to \ref{qn:find_uniqueness}, proving a Poincar\'e inequality for subsets $ S \subset [0,1]$ and showing that this Poincar\'e inequality 
implies uniqueness of $[0,1] \backslash S$ at some bandwidth. 

First, we need some definitions.
These definitions generalize \citet{pesenson2008graphsampling}'s observation that for finite graphs, any strict subset $T$ of the vertex set satisfies a Poincar\'e inequality with constant determined by spectral properties of another graph $\Gamma(T)$. \rebuttal{Intuitively, $\Gamma(T)$ is designed to capture the non-Euclidean geometry induced by nodes in $T$ and their neighbors.}
We now want to construct an analogous $\Gamma(S)$ in the graphon case. Fix an arbitrary graphon $\bbW$ and measurable subset $S \subset [0,1]$. Define the neighborhood $\mathcal{N}(S)$ of $S$ as the measurable set $
\mathcal{N}(S) := \left\{v \in [0,1] \backslash S : \int_S \bbW(u,v) \D u > 0\right\}$.

To define $\Gamma(S)$, make a copy of $S$ by letting $S'$ be a set disjoint from $[0,1]$ such that there is a measure-preserving bijection $\theta: S' \to S$. Let $\tilde{S} := S \cup \mathcal{N}(S)$ and $\tilde{S}' := S' \cup \mathcal{N}(S)$. Observe that one can extend $\theta: \tilde{S}' \to \tilde{S}$ by mapping elements of $\mathcal{N}(S)$ to itself. We will define a graphon on the extended domain $D = \tilde{S} \cup S'$:
\begin{equation}
\Gamma(S): D^2 \to [0,1]: (u,v) \mapsto 
    \begin{cases}
        \bbW(u,v) &\text{ if } u \in \tilde{S} \text{ and } v \in \tilde{S}\\
        \bbW(\theta(u), \theta(v)) &\text{ if } u \in \tilde{S}'  \text{ and } v \in \tilde{S}'\\
        0 &\text{ otherwise.}
    \end{cases}
\end{equation}
Spectral properties of $\Gamma(S)$ determine the constant in our Poincar\'e inequality\rebuttal{: a class of important results in functional analysis that control the action of the functional (normalized Laplacian) by the (non-Euclidean) geometry of the underlying space (here, a graph).}
\begin{theorem}[Graphon Poincar\'e inequality ]\label{thm:poincare}
Let $S \subsetneq [0,1]$ such that $\mathcal{N}(S)$ has positive Lebesgue measure. Denote by $\lambda_1$ the smallest nonzero eigenvalue of the scaled normalized Laplacian operator applied to $\Gamma(S)$. Then for every $X \in L^2([0,1])$ \rebuttal{supported only on $S$}, $\smash{\| X \|_{L^2} \leq \frac{1}{\lambda_1} \| \overline{\ccalL}X \|_{L^2}}$.
\end{theorem}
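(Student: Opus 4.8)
The plan is to reduce the inequality to a spectral-gap estimate on the auxiliary graphon $\Gamma(S)$ via an antisymmetric ``doubling'' of $X$. Given $X \in L^2([0,1])$ supported on $S$, I would define its extension $\tilde{X}$ on the domain $D = \tilde{S}\cup S' = S \cup \mathcal{N}(S)\cup S'$ by setting $\tilde{X}=X$ on $S$, $\tilde{X}=-X\circ\theta$ on $S'$, and $\tilde{X}=0$ on $\mathcal{N}(S)$. Since $\theta$ is measure-preserving and $X$ vanishes off $S$, this immediately gives $\|\tilde{X}\|_{L^2(D)}^2 = 2\|X\|_{L^2}^2$.

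The crucial step is to compute $\overline{\ccalL}_{\Gamma(S)}\tilde{X}$ and match it against $\overline{\ccalL}X$. First I would record the degree function $\bbd_{\Gamma}$ of $\Gamma(S)$: because $\mathcal{N}(S)$ contains every neighbor of $S$ lying outside $S$, any $v\in S$ has all of its $\bbW$-neighbors inside $\tilde{S}$, so $\bbd_{\Gamma}(v)=\bbd(v)$ on $S$ and, by the $\theta$-symmetry of the construction, $\bbd_{\Gamma}(v)=\bbd(\theta(v))$ on $S'$. Feeding $\tilde{X}$ into the normalized Laplacian and using that $S$ and $S'$ carry no direct $\Gamma(S)$-edges, a short calculation gives $(\overline{\ccalL}_{\Gamma(S)}\tilde{X})(v)=(\overline{\ccalL}X)(v)$ for $v\in S$ and $(\overline{\ccalL}_{\Gamma(S)}\tilde{X})(v)=-(\overline{\ccalL}X)(\theta(v))$ for $v\in S'$. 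On $\mathcal{N}(S)$ the contributions from $S$ and from its mirror $S'$ have opposite signs (antisymmetry of $\tilde{X}$) but equal magnitude (measure-preservation of $\theta$ together with $\bbd_{\Gamma}=\bbd\circ\theta$ on $S'$), so they cancel and $\overline{\ccalL}_{\Gamma(S)}\tilde{X}$ vanishes on $\mathcal{N}(S)$. Hence $\|\overline{\ccalL}_{\Gamma(S)}\tilde{X}\|_{L^2(D)}^2 = 2\int_S |(\overline{\ccalL}X)(v)|^2\,\D v \le 2\|\overline{\ccalL}X\|_{L^2}^2$, where the inequality appears precisely because we discard the nonnegative mass of $\overline{\ccalL}X$ on $\mathcal{N}(S)$.

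Next I would verify that $\tilde{X}$ is orthogonal to the kernel of $\overline{\ccalL}_{\Gamma(S)}$, which on each connected component is spanned by the degree-root $\sqrt{\bbd_{\Gamma}}$. Let $\sigma$ be the involution of $D$ that swaps $S\leftrightarrow S'$ through $\theta$ and fixes $\mathcal{N}(S)$; then $\Gamma(S)$, and hence $\sqrt{\bbd_{\Gamma}}$, is $\sigma$-invariant while $\tilde{X}\circ\sigma=-\tilde{X}$, so measure-preservation of $\sigma$ forces $\langle \tilde{X},\sqrt{\bbd_{\Gamma}}\rangle=-\langle \tilde{X},\sqrt{\bbd_{\Gamma}}\rangle=0$. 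The hypothesis that $\mathcal{N}(S)$ has positive measure is exactly what glues the two copies together through $\mathcal{N}(S)$: it prevents the kernel from acquiring an antisymmetric degree-root (which would otherwise destroy this orthogonality and, indeed, the inequality itself for $X\propto\sqrt{\bbd}$ on an isolated piece of $S$). When $\Gamma(S)$ is connected the kernel is one-dimensional and $\sigma$-symmetric, so the parity argument closes immediately. The variational characterization of the smallest nonzero eigenvalue then yields $\|\overline{\ccalL}_{\Gamma(S)}\tilde{X}\|_{L^2(D)} \ge \lambda_1\,\|\tilde{X}\|_{L^2(D)}$.

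Chaining the three estimates gives $2\|\overline{\ccalL}X\|_{L^2}^2 \ge \|\overline{\ccalL}_{\Gamma(S)}\tilde{X}\|_{L^2(D)}^2 \ge \lambda_1^2\,\|\tilde{X}\|_{L^2(D)}^2 = 2\lambda_1^2\,\|X\|_{L^2}^2$, i.e. $\|X\|_{L^2}\le \frac{1}{\lambda_1}\|\overline{\ccalL}X\|_{L^2}$, as claimed. I expect the main obstacle to be the bookkeeping around the shared boundary $\mathcal{N}(S)$: one must confirm both that the degree function of $\Gamma(S)$ genuinely coincides with $\bbd$ on $S$ (so that $\overline{\ccalL}_{\Gamma(S)}$ restricts to the true $\overline{\ccalL}$ there) and that the two boundary contributions cancel exactly, and one must track that $D$ need not have unit Lebesgue measure—so ``scaled normalized Laplacian'' refers to the operator on $D$ (equivalently, after rescaling $D$ to a probability space). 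Fortunately the degree normalization cancels inside $\overline{\ccalL}_{\Gamma(S)}$, and the measure rescaling multiplies $\|\tilde{X}\|$ and $\|\overline{\ccalL}_{\Gamma(S)}\tilde{X}\|$ by the same factor, leaving the ratio—and therefore $\lambda_1$—unchanged.
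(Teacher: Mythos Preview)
Your proposal is correct and follows essentially the same route as the paper: antisymmetric doubling of $X$ onto $D$, checking that the $\Gamma(S)$-degree coincides with $\bbd$ on $S$ so that $\overline{\ccalL}_{\Gamma(S)}\tilde X$ restricts to $\pm\overline{\ccalL}X$ on the two copies and cancels on $\mathcal N(S)$, orthogonality of $\tilde X$ to the degree-root kernel by antisymmetry, and the spectral-gap bound. Your write-up is in fact a bit more careful than the paper's (explicit tracking of $\theta$, the involution $\sigma$ for orthogonality, and the connectedness/scaling caveats), but the argument is identical.
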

The proof of this theorem is in \ref{app:poincare_proof} and generalizes that in \citep{pesenson2008graphsampling}.  Next, we prove that if we can find a set $S$ that satisfies a Poincar\'e inequality with constant $\Lambda$, then its complement is a uniqueness set for any $PW_\lambda(\bbW)$ with $\lambda < 1/\Lambda$.
\begin{theorem} \label{thm:lam_uniqueness}
Let $S$ be a proper subset of $[0,1]$ satisfying the Poincar\'e inequality
\begin{equation} \label{eq:poincare}
    \| X \|_{L^2} \leq \Lambda \| \overline{\ccalL} X \|_{L^2}
\end{equation}
for all $X \in L^2([0,1])$ \rebuttal{supported only on $S$.}
Then, $U = [0,1] \backslash S$ is a uniqueness set for any $PW_\lambda(\bbW)$ with $\lambda < 1/\Lambda$.
\end{theorem}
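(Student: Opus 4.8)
The plan is to exploit the linearity of the Paley-Wiener space together with the spectral characterization of bandlimited signals, turning the two hypotheses---the Poincar\'e inequality for signals supported on $S$ and the bandwidth constraint---into two opposing norm estimates that can only be reconciled by the zero signal.

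First I would reduce the uniqueness statement to a single signal. Given $X,Y \in PW_\lambda(\bbW)$ with $\int_U |X-Y|^2\,du = 0$, set $Z := X-Y$. Since $PW_\lambda(\bbW)$ is cut out by the linear conditions $\hat{X}(\lambda_i)=0$ for $\lambda_i>\lambda$ and the WFT~(\ref{eq:projection_operator_on_eigenbasis}) is linear, $Z \in PW_\lambda(\bbW)$ as well. Moreover $\int_U|Z|^2\,du=0$ says exactly that $Z=0$ almost everywhere on $U=[0,1]\backslash S$, i.e.\ $Z$ is supported only on $S$. By \Cref{def:graphon_uniqueness} it then suffices to prove $\|Z\|_{L^2}=0$.

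Next I would apply the two hypotheses to $Z$. Because $Z$ is supported only on $S$, the assumed Poincar\'e inequality~(\ref{eq:poincare}) gives $\|Z\|_{L^2}\le \Lambda\,\|\overline{\ccalL}Z\|_{L^2}$. For the reverse estimate I would expand $Z$ in the orthonormal eigenbasis $\{\varphi_i\}$ of $\overline{\ccalL}$ (available by the spectral theory recalled in \Cref{sbs:wsp}): bandlimitedness means the expansion runs only over indices with $\lambda_i\le\lambda$, so $\overline{\ccalL}Z=\sum_{\lambda_i\le\lambda}\lambda_i\,\hat{Z}(\lambda_i)\varphi_i$, and by Parseval together with $0\le\lambda_i\le\lambda$,
\[
\|\overline{\ccalL}Z\|_{L^2}^2=\sum_{\lambda_i\le\lambda}\lambda_i^2\,|\hat{Z}(\lambda_i)|^2\le \lambda^2\sum_{\lambda_i\le\lambda}|\hat{Z}(\lambda_i)|^2=\lambda^2\,\|Z\|_{L^2}^2,
\]
hence $\|\overline{\ccalL}Z\|_{L^2}\le \lambda\,\|Z\|_{L^2}$. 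Chaining the two bounds yields $\|Z\|_{L^2}\le \Lambda\lambda\,\|Z\|_{L^2}$, and since $\lambda<1/\Lambda$ forces $\Lambda\lambda<1$, this is only possible if $\|Z\|_{L^2}=0$, which is the claim.

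The computations are routine; the only points needing care are interpretive rather than technical. I must read ``supported only on $S$'' as vanishing a.e.\ on $U$ so that the Poincar\'e hypothesis genuinely applies to $Z$, and I must keep the eigenvalue bookkeeping consistent so that the bandlimited expansion of $Z$ and the action of $\overline{\ccalL}$ are compatible (both indexed by the same Laplacian spectrum of $\bbW$). I do not expect a real obstacle here: once the reduction to $Z$ and its support are in place, both inequalities follow directly, and the strict inequality $\lambda<1/\Lambda$ is exactly what makes the contraction $\Lambda\lambda<1$ collapse $Z$ to zero.
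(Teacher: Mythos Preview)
Your proposal is correct and follows essentially the same argument as the paper: reduce to $Z=X-Y$, use bandlimitedness to get $\|\overline{\ccalL}Z\|_{L^2}\le\lambda\|Z\|_{L^2}$, apply the Poincar\'e inequality on $S$ to get $\|Z\|_{L^2}\le\Lambda\|\overline{\ccalL}Z\|_{L^2}$, and chain the two to force $\|Z\|_{L^2}=0$ via $\Lambda\lambda<1$. The only difference is that you spell out the Parseval computation for the spectral bound, whereas the paper states it in one line.
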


The proof of this result is in \ref{app:poincare_proof}, providing an answer to \ref{qn:find_uniqueness}: given a bandwidth limit $\lambda$, one can find a uniqueness set $U$ by searching through measurable sets $S$ and computing the smallest nonzero eigenvalue $\lambda_1$ of $\Gamma(S)$. If $\lambda < \lambda_1$ then $U = [0,1] \backslash S$ is a uniqueness set. This approach is inefficient as we may need to check every $S$. Next, we investigate a more efficient approach.



\subsection{Gaussian elimination and convergence of uniqueness sets}\label{sbs:consistency}



Our second approach to \ref{qn:find_uniqueness} relies on approximating the graphon with a sequence of graphs $\{\bbG_n\}$ which has the graphon as its limit, and solving \ref{qn:find_uniqueness} in one of these graphs. 
While attempting to solve the graphon sampling problem on a finite graph may appear tautological, our goal is to exploit the \rebuttal{countable (and often finite)} rank of the graphon to make the problem tractable.

To establish the connection between the continuous sampling sets in a graphon and its finite rank $K$, we partition the graphon sampling set into $K$ elements and view each element as representing a mixture component or ``cluster''. This leads to a connection to mixture models and spectral clustering, which we exploit in two ways. First, to quantify the quality of the graphon sampling sets via a ``difficulty'' function borrowed from \citep{schiebinger2015geometry} relating to the separability of the mixture components. Second, similar to consistency of kernelized spectral clustering, to prove that  in convergent graph sequences, graph sampling sets converge to graphon sampling sets. 



\textbf{Graphons are equivalent to mixture models of random graphs.} To make the above connection rigorous, the first step is to show we can view the graphon as a mixture model of random graphs.

\begin{definition}[Mixture model for random graphs] \label{def:mixture}
Let $\Omega \subset \reals^d$ be a compact space and $\ccalP(\Omega)$ the space of probability measures on $\Omega$. For some number of components $K$, components $\{\mbP_i \in \ccalP(\Omega)\}_{i = 1}^K$, weights $\{w_i \geq 0\}_{i =1}^K$ that sum to $1$, and a bounded, symmetric, measurable kernel $\bbk: \Omega \times \Omega \to [0,1]$, a \emph{mixture model for random graphs} $\mbK(\Omega,\mbP,\bbk)$ samples nodes from some mixture distribution; then sample edges using $\ccalB$ - the Bernoulli distribution over the kernel $\bbk$:
\begin{align}
\omega_w \sim \mbP := \sum\nolimits_{i=1}^K w_i \mbP_i \text{, for } 1 \leq w \leq n,\qquad 
(u,v) \sim \ccalB(\bbk(\omega_u,\omega_v)) \text{, for } 1 \leq u,v \leq n \text{.}
\end{align}
\end{definition}

Historically, some authors \citep{borgs2017graphons} have defined graphons as in \ref{def:mixture}, where $\mbP$ is not necessarily a mixture. Under mild conditions on $\mbP$, we assert that our simpler definition of a graphon is still equivalent to a random graph model. We leave the proof to \ref{app:convergence_proof}. 

\begin{proposition} \label{prop:uniform_map}
    Assume the CDF of $\mbP$ is strictly monotone. Then, the mixture model $\mbK(\Omega,\mbP,\bbk)$ (\ref{def:mixture}) is equivalent to the random graph model $\mbW([0,1],\mbU,\bbW)$ where $\bbW: [0,1]^2 \to [0,1]$ is a graphon given by $\bbW = \bbk \circ \beta$ and $\beta: [0,1] \to \Omega$ is the inverse of the CDF of $\mbP$.
\end{proposition}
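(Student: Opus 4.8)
The plan is to exhibit an explicit measure-preserving change of variables that transports the node-sampling and edge-sampling steps of $\mbK(\Omega,\mbP,\bbk)$ onto those of $\mbW([0,1],\mbU,\bbW)$, so that the two procedures induce the same distribution over random graphs on $n$ nodes for every $n$ (which is what equivalence of random graph models means). The workhorse is the probability integral transform, i.e.\ inverse-transform sampling. Let $F$ denote the CDF of $\mbP$, which by hypothesis is strictly monotone and hence injective, so its inverse $\beta := F^{-1}$ is a well-defined measurable map $[0,1] \to \Omega$. The first step is to verify that $\beta$ pushes the uniform (Lebesgue) measure $\mbU$ on $[0,1]$ forward to $\mbP$, i.e.\ $\beta_* \mbU = \mbP$; equivalently, if $z \sim \mbU$ then $\beta(z) \sim \mbP$. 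This is the classical statement that $F(\omega) \sim \mbU$ when $\omega \sim \mbP$, and strict monotonicity is exactly what makes $\beta = F^{-1}$ a genuine inverse rather than a generalized quantile.

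With $\beta_* \mbU = \mbP$ in hand I would match the two models step by step. For the node step, drawing $\omega_w \sim \mbP$ has the same law as drawing $z_w \sim \mbU$ and setting $\omega_w = \beta(z_w)$, so $(\omega_1,\dots,\omega_n)$ and $(\beta(z_1),\dots,\beta(z_n))$ are equal in distribution. For the edge step, conditional on the node positions the mixture model draws $(u,v) \sim \ccalB(\bbk(\omega_u,\omega_v))$ while the graphon model draws $(u,v) \sim \ccalB(\bbW(z_u,z_v))$; since $\bbW = \bbk \circ \beta$ means precisely $\bbW(z_u,z_v) = \bbk(\beta(z_u),\beta(z_v)) = \bbk(\omega_u,\omega_v)$, the conditional edge laws coincide. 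Because both the node marginals and the edge conditionals agree, the joint law over the sampled adjacency structure is identical, which is the claimed equivalence.

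The main subtlety, and the step I would handle most carefully, is the meaning of ``CDF'' and its inverse when $\Omega \subset \reals^d$ with $d > 1$, since a multivariate CDF is not literally invertible into a map $[0,1] \to \Omega$. For $d=1$ the argument above is textbook. For general $d$ I would instead invoke the isomorphism theorem for standard Borel probability spaces: $(\Omega,\mbP)$ is Borel-isomorphic mod $0$ to $([0,1],\mbU)$ once atoms are excluded, which the strict-monotonicity hypothesis guarantees, yielding a measure-preserving $\beta$ with $\beta_* \mbU = \mbP$, after which the rest of the argument is unchanged. The strict-monotonicity assumption should thus be read as the condition ensuring $\mbP$ is atomless and that the transport map is a genuine monotone inverse. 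Either way, the content of the proof is concentrated entirely in the existence of the measure-preserving $\beta$; once it exists, the equality of the two graph-generating distributions follows immediately from $\bbW = \bbk \circ \beta$.
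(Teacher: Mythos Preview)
Your proposal is correct and follows essentially the same approach as the paper: both identify $\beta$ as the inverse CDF via the probability integral transform. Your treatment is in fact more complete than the paper's, which only derives $\beta = F^{-1}$ and stops, whereas you explicitly verify that the edge-conditional laws agree under $\bbW = \bbk \circ \beta$ and you flag the $d>1$ subtlety (handled via standard Borel isomorphism) that the paper leaves implicit.
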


Recall that \ref{qn:find_uniqueness} prescribes a bandwidth $\lambda$, and requires finding a uniqueness set for graphon signals with the prescribed bandwidth. Let $K$ be the number of eigenvalues of $\bbW$ which are smaller than $\lambda$ (i.e., $K = \sup \{k \mid\lambda_k < \lambda\}$). The following result shows that $K$ is precisely the number of elements or samples that we need to add to the graphon uniqueness set.

\begin{proposition}
    There exists a set of functions $\{f_i\}_{i = 1}^K$, called \emph{frames}, such that for any graphon signal $X \in \rm{PW}_\lambda(\bbW)$ there is a unique reconstruction of $X$ from samples $\{\langle f_i, X \rangle\}_{i = 1}^K$. 
\end{proposition}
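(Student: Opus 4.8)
The plan is to reduce the statement to a finite-dimensional linear-algebra fact, exploiting that $PW_\lambda(\bbW)$ is finite-dimensional. First I would record that, by the definition of $PW_\lambda(\bbW)$ and the fact that $\{\varphi_i\}$ is an orthonormal basis of $L^2([0,1])$, every $X \in PW_\lambda(\bbW)$ admits the expansion $X = \sum_{i=1}^K \hat{X}(\lambda_i)\varphi_i$, so that $PW_\lambda(\bbW) = \mathrm{span}\{\varphi_1,\dots,\varphi_K\}$ is a $K$-dimensional Hilbert space, where $K = \sup\{k : \lambda_k < \lambda\}$ (for $\lambda$ not equal to an eigenvalue the conditions $\lambda_i \le \lambda$ and $\lambda_i < \lambda$ defining the span and the count coincide).

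Next I would phrase reconstruction as invertibility of the sampling (analysis) operator. For any candidate $\{f_i\}_{i=1}^K \subset L^2([0,1])$, define $\mathcal{A}\colon PW_\lambda(\bbW) \to \R^K$ by $\mathcal{A}X = (\langle f_i, X\rangle)_{i=1}^K$. A signal $X$ can be uniquely reconstructed from its samples $\{\langle f_i, X\rangle\}$ exactly when $\mathcal{A}$ is injective; since $\dim PW_\lambda(\bbW) = K = \dim \R^K$, injectivity is equivalent to $\mathcal{A}$ being a bijection, i.e.\ to invertibility of the $K\times K$ matrix $M_{ij} = \langle f_i, \varphi_j\rangle$. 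Thus it suffices to produce one family $\{f_i\}$ for which $M$ is invertible, and the reconstruction formula is then $X = \sum_{i=1}^K (\mathcal{A}X)_i\, \tilde{f}_i$, where $\{\tilde f_i\}$ is the dual basis.

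Then I would exhibit such a family. The cleanest choice is $f_i = \varphi_i$, for which $M = I_K$, the samples are the Fourier coefficients $\langle f_i, X\rangle = \hat{X}(\lambda_i)$, and $X = \sum_{i=1}^K \langle f_i, X\rangle\,\varphi_i$. To connect this to the geometric uniqueness sets of the preceding sections---so that the samples genuinely correspond to observing $X$ on a uniqueness set $U$---I would instead take the $f_i$ to be supported on $U$. Writing $P_U$ for multiplication by $\1_U$, the uniqueness-set property (\Cref{def:graphon_uniqueness}) says precisely that $X \mapsto P_U X$ is injective on $PW_\lambda(\bbW)$, whence $\{P_U\varphi_1,\dots,P_U\varphi_K\}$ are linearly independent in $L^2(U)$. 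Taking $\{f_i\}$ to be the biorthogonal system to $\{P_U\varphi_j\}$ inside their span then gives $\langle f_i, P_U\varphi_j\rangle = \delta_{ij}$; since each $f_i$ is supported on $U$ we get $\langle f_i, X\rangle = \langle f_i, P_U X\rangle = \hat{X}(\lambda_i)$, so again $X = \sum_{i=1}^K \langle f_i, X\rangle\,\varphi_i$, and (after normalization) the $\{f_i\}$ constitute a frame for $PW_\lambda(\bbW)$.

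The routine parts are the orthonormal expansion and the finite-dimensional linear algebra. The one genuine subtlety---the main obstacle---is the bookkeeping at the cutoff: when $\lambda$ coincides with an eigenvalue of $\overline{\ccalL}$ of multiplicity greater than one, the count $K$ and the span defining $PW_\lambda(\bbW)$ must be reconciled so that $\dim PW_\lambda(\bbW) = K$ holds. Moreover, establishing independence of the restricted functions $\{P_U\varphi_j\}$ is exactly where the uniqueness-set hypothesis is indispensable---and hence where the Poincar\'e inequality of \Cref{thm:poincare}--\Cref{thm:lam_uniqueness} (or the Gaussian-elimination construction) enters---rather than the trivial choice $f_i = \varphi_i$, which does not produce samples localized on $U$.
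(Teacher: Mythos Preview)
Your proposal is correct and follows essentially the same approach as the paper: both reduce to the finite-dimensional observation that $PW_\lambda(\bbW)=\mathrm{span}\{\varphi_1,\dots,\varphi_K\}$, so reconstruction amounts to invertibility of the $K\times K$ Gram matrix $(\langle f_i,\varphi_j\rangle)_{i,j}$, and the choice $f_i=\varphi_i$ (or any family making this matrix full rank) suffices. The paper's own justification is in fact just the one-paragraph sketch immediately following the proposition; your write-up is more detailed, and your additional construction of frames supported on a uniqueness set $U$ via the biorthogonal system to $\{P_U\varphi_j\}$ is extra content the paper does not include, though it nicely ties the proposition back to \Cref{def:graphon_uniqueness}.
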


To see why this result is possible, recall that if $X \in \rm{PW}_\lambda(\bbW)$ for some $\lambda < \lambda_{K+1}$ then $X$ is a linear combination of $K$ eigenfunctions $\{\varphi_i\}_{i = 1}^K$ corresponding to $\{\lambda_i\}_{i = 1}^K$. Therefore, it suffices to calculate $K$ coefficients $\bbc = (c_i)_{i = 1}^K$ by forming a full rank system (if one exists), which can then be solved via Gaussian elimination:
$$\left(\begin{smallmatrix}
\langle f_1, \varphi_1 \rangle & \langle f_1, \varphi_2 \rangle & \ldots & \langle f_1, \varphi_K \rangle \\
\langle f_2, \varphi_1 \rangle & \langle f_2, \varphi_2 \rangle & \ldots & \langle f_2, \varphi_K \rangle \\
\vdots & \vdots & \ldots & \vdots \\
\langle f_K, \varphi_1 \rangle & \langle f_K, \varphi_2 \rangle & \ldots & \langle f_K, \varphi_K \rangle 
\end{smallmatrix} \right) \bbc = \left(\begin{smallmatrix}
    \langle f_1, X \rangle\\
    \langle f_2, X \rangle\\
    \vdots \\
    \langle f_K, X \rangle\\
\end{smallmatrix} \right)$$

The next result tells us that different choices of mixture components and $\bbk$ result in frames with different approximation quality. Specifically, the approximation quality is a function of how well-separated the components in $\mbP$ are with respect to $\bbk$ and is measured quantitatively by a difficulty function $\phi(\mbP, K)$ \citep{schiebinger2015geometry}. E.g., if there are repeated components in the mixture, or a bimodal component, we expect $\phi$ to be high. 
\begin{proposition}\label{prop:hs_norm_difficult_function}
    When $\bbW$ is viewed as a mixture model of random graphs $\mbK(\Omega, \mbP, \bbk)$ with $K$ components $\{\mbP_i\}_{i  = 1}^K$ the \rebuttal{square-root kernelized density} $\{q_i := \sqrt{\smash[b]{\int_{\Omega} \smash{\bbk(\Omega, \cdot) \D \mbP_i(\Omega)}}}\}_{i = 1}^K$ is a good \rebuttal{frame} approximation. 
    Quantitatively, let $\bbPhi$ be the subspace spanned by the eigenfunctions of $\bbW$ corresponding to $\{\lambda_i\}_{i = 1}^K$, and $\bbQ$ the subspace spanned by the $\{q_i\}_{i = 1}^K$. Then:
    \begin{equation}
        \|\Pi_{\bbPhi}-\Pi_{\bbQ}\|_{HS} \leq 16 \sqrt{12+b}\phi(\mbP,\bbk)\text{,}
    \end{equation}
    where $\|.\|_{HS}$ is the Hilbert-Schmidt norm, $\Pi$ is the projection operator, and the difficulty function $\phi$ and the boundedness parameter $b$ are as in \citep{schiebinger2015geometry} and \ref{app:elements_from_schiebinger}\footnote{For completeness, we have define and discuss the parameters of the difficulty function in \ref{app:elements_from_schiebinger}.}.
\end{proposition}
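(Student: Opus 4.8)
The plan is to recognize this proposition as a population-level instance of the kernelized spectral clustering analysis of \citet{schiebinger2015geometry}, transported from the mixture-model side back to the graphon side via \Cref{prop:uniform_map}. First I would invoke \Cref{prop:uniform_map}: assuming the CDF of $\mbP$ is strictly monotone, there is an inverse-CDF map $\beta: [0,1] \to \Omega$ that pushes Lebesgue measure forward to $\mbP$ and satisfies $\bbW = \bbk \circ \beta$. The composition operator $U: L^2(\Omega, \mbP) \to L^2([0,1])$, $Uf = f \circ \beta$, is then a unitary isomorphism. A change of variables $\omega = \beta(t)$ shows the degree functions agree, $\bbd(s) = \int_\Omega \bbk(\beta(s),\omega)\,\D\mbP(\omega)$, so the normalizations agree and one checks directly that $U$ intertwines Schiebinger's normalized similarity operator on $L^2(\Omega,\mbP)$ with the normalized graphon operator $\bar{\bbW}$ on $L^2([0,1])$; hence $U$ carries eigenfunctions to eigenfunctions with identical eigenvalues.

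Second, I would align the two spectral objects. Since $\bar{\ccalL} = \Id - \bar{\bbW}$ by \eqref{eq:graphon_lap}, the $K$ smallest eigenvalues $\lambda_1 \le \cdots \le \lambda_K$ of $\bar{\ccalL}$ correspond to the $K$ largest eigenvalues of $\bar{\bbW}$; thus $\bbPhi$ is exactly the top-$K$ eigenspace of the normalized similarity operator studied in \citet{schiebinger2015geometry}. On the mixture side, the functions $q_i = \sqrt{\int_\Omega \bbk(\omega,\cdot)\,\D\mbP_i(\omega)}$ are the square roots of the component-wise degree functions, which are precisely the ``ideal'' coordinate functions whose span is the reference subspace $\bbQ$ in their geometric analysis. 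Because $U$ is an isometry, it maps orthogonal projections to orthogonal projections and preserves the Hilbert-Schmidt norm, so $\|\Pi_{\bbPhi} - \Pi_{\bbQ}\|_{HS}$ is unchanged whether computed on $[0,1]$ or on $(\Omega,\mbP)$.

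Third, with the objects matched, I would apply the population subspace bound of \citet{schiebinger2015geometry} (the statement recalled in \ref{app:elements_from_schiebinger}), a Davis--Kahan-type estimate in which the difficulty function controls the relevant eigengap/perturbation. This gives $\|\Pi_{\bbPhi} - \Pi_{\bbQ}\|_{HS} \le 16\sqrt{12 + b}\,\phi(\mbP,\bbk)$, where $b$ is the boundedness parameter of the kernel. To do so I would verify that the hypotheses of their theorem hold in our setting: boundedness of $\bbk$ (so $b$ is finite and matches their convention), the mixture-model structure of \Cref{def:mixture}, and the nondegeneracy ensuring the top-$K$ eigenvalues separate from the remaining spectrum. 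Once these are checked, the constant and the difficulty function transfer verbatim.

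The main obstacle I anticipate is bookkeeping rather than a new idea: carefully matching normalization conventions (degree normalization, and whether their operator acts on $L^2(\mbP)$ versus our $L^2([0,1])$), and confirming that $\phi(\mbP,\bbk)$ and $b$ are invariant under the measure-preserving reparametrization $\beta$, so that the numerical constant $16\sqrt{12+b}$ is reproduced exactly. The one genuinely substantive step is establishing that $U$ is a unitary equivalence of the two normalized operators; everything downstream is then a direct citation of the population bound.
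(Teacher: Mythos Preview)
Your proposal is correct and matches the paper's approach: the paper treats this proposition as a direct consequence of \cite[Thm.~1]{schiebinger2015geometry}, with essentially no additional argument beyond introducing the Laplacian $\ccalL_K$ on $(\Omega,\mbP)$ and citing that theorem. Your extra care in spelling out the unitary equivalence $U: L^2(\Omega,\mbP)\to L^2([0,1])$ via \Cref{prop:uniform_map} and checking that it intertwines the two normalized operators is useful bookkeeping that the paper leaves implicit, but the substantive step in both is the same citation.
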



\rebuttal{Next, we connect the square-root kernelized density $q_i$ back to graphon uniqueness sets. The following result shows that when the $q_i$'s align with eigenfunctions of $\bbW$, there is a clear correspondence between the uniqueness set and the mixture components.} The proof is in \ref{app:convergence_proof}.
\begin{theorem}\label{thm:component_to_uniqueness}
    Fix a small $\eps > 0$. Assuming that $\|q_i - \varphi_i\|_{L^2(\mbP_i)} < \eps$ for all $i \in [K]$; and that there exists a set of disjoint measurable subsets $\{A_i \subset [0,1]\}_{i = 1}^K$ such that EITHER:
    \begin{itemize}[leftmargin=*]
    \itemsep0em
        \item the kernelized density $p_i := \int_{\ccalX} \bbk(\omega,\cdot) \D\mbP_i(\omega)$ is concentrated around an interval $A_i \subset [0,1]$ in the sense that $p_i(A_i) - K^2 \eps^2 > \sum_{i' \neq i} p_i(A_{i'})/(K - 1)^2$ for each $i \in [K]$, OR 
        \item for each $i \in [K]$, the likelihood ratio statistic is large: $\frac{p_{i}(A_i) - K^2 \eps^2}{\sum_{k \neq i} p_{k}(A_i)} > 1/(K - 1)^2$, 
    \end{itemize}
         then the set $U = \bigcup_{i = 1}^K A_i$ is a uniqueness set for $\rm{PW}_{\lambda}(\bbW)$ for any $\lambda \in (\lambda_K, \lambda_{K + 1})$.
\end{theorem}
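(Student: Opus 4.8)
The plan is to exploit the finite dimensionality of the target space. For $\lambda \in (\lambda_K, \lambda_{K+1})$ the space $PW_\lambda(\bbW)$ is exactly $\mathrm{span}\{\varphi_1,\dots,\varphi_K\}$, so by \Cref{def:graphon_uniqueness} it suffices to show that the restriction map $X \mapsto X|_U$ is injective on this $K$-dimensional space. Writing $X = \sum_{i=1}^K c_i \varphi_i$, injectivity is equivalent to invertibility of the $K \times K$ Gram matrix $G_{ij} = \int_U \varphi_i \varphi_j \, du$; and since the $\varphi_i$ are orthonormal on $[0,1]$, this is equivalent to the operator-norm bound $\|G^S\| < 1$, where $G^S_{ij} = \int_S \varphi_i \varphi_j \, du$ and $S = [0,1]\setminus U$. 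I would phrase the whole argument through a single $K\times K$ matrix and reduce the problem to showing it is strictly diagonally dominant.

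Next I would replace the eigenfunctions $\varphi_i$, which are only known abstractly, by the explicit square-root kernelized densities $q_i$. The hypothesis $\|q_i - \varphi_i\|_{L^2(\mbP_i)} < \eps$, together with the Hilbert--Schmidt bound $\|\Pi_{\bbPhi} - \Pi_{\bbQ}\|_{HS}$ of \Cref{prop:hs_norm_difficult_function}, lets me swap $\varphi_i$ for $q_i$ in every inner product at the cost of $O(\eps)$ error terms; these are precisely the terms absorbed by the $K^2 \eps^2$ slack appearing in both hypotheses. The payoff is that $q_i^2 = p_i$ is the kernelized density of the $i$-th mixture component, so the diagonal entries of the resulting matrix become the self-masses $p_i(A_i)$ while the off-diagonals are governed by the cross-masses $p_i(A_j)$ for $i \neq j$. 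At this step I would also use that the $q_i$ inherit near-orthonormality from the orthonormal $\varphi_i$, which is what keeps the off-region (``leakage'') contributions small.

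With the matrix expressed through the mass array $P_{ij} = p_i(A_j)$, the two bulleted hypotheses become the two classical sufficient conditions for invertibility. The concentration bullet is strict \emph{row}-diagonal dominance: the mass $p_i$ places on its own region $A_i$ beats the pooled mass it leaks onto the other $A_{i'}$. The likelihood-ratio bullet is strict \emph{column}-diagonal dominance: on the region $A_i$ the density $p_i$ dominates all competing densities $p_k$, $k \neq i$. In either case the L\'evy--Desplanques theorem (equivalently, Gershgorin applied to the matrix or to its transpose) forces invertibility, which is exactly why the statement offers an EITHER/OR. Invertibility then gives $c = 0$, so $X|_U = 0 \Rightarrow X = 0$, establishing that $U = \bigcup_{i=1}^K A_i$ is a uniqueness set for $PW_\lambda(\bbW)$.

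I expect the main obstacle to lie in the quantitative bookkeeping of the second and third steps rather than in the conceptual reduction. Two points need care. First, the hypothesis controls $\varphi_i - q_i$ in the component-weighted norm $L^2(\mbP_i)$, whereas the matrix entries are Lebesgue integrals over the regions $A_j$; I must pass between these using the kernelized densities and keep the induced error inside the $K^2\eps^2$ budget. Second, I must bound the off-diagonal leakage tightly enough---leveraging the near-orthonormality of the $q_i$ rather than a crude term-by-term Cauchy--Schwarz, which would lose factors of $K$---so that the specific constants $1/(K-1)^2$ and $K^2\eps^2$ actually close the diagonal-dominance inequality. Routing both the concentration and the likelihood-ratio cases through the single row-/column-dominance dichotomy is what makes the argument uniform.
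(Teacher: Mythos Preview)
Your high-level plan matches the paper: reduce uniqueness to invertibility of a single $K\times K$ matrix, replace $\varphi_j$ by $q_j$ at an $O(\eps)$ cost absorbed by the $K^2\eps^2$ slack, and finish with Gershgorin, the two bullets corresponding exactly to the row- vs.\ column-dominance dichotomy. The gap is in the matrix you commit to. After swapping $\varphi_i\to q_i$ in the Gram matrix $G_{ij}=\int_U\varphi_i\varphi_j$, you assert that ``the diagonal entries \dots\ become the self-masses $p_i(A_i)$''. They do not: the diagonal is $\int_U q_i^2=\int_U p_i=\sum_{k}p_i(A_k)$, the mass of $p_i$ on the \emph{whole} union $U$, and the off-diagonals $\int_U q_iq_j=\sum_k\int_{A_k}q_iq_j$ mix all regions at once. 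So $G$ (or $G^S$) does not decompose through the mass array $P_{ij}=p_i(A_j)$, and neither bulleted hypothesis delivers diagonal dominance of $G$. The complement formulation $\|G^S\|<1$ is worse, since the hypotheses say nothing about $p_i(S)$.

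The paper's fix is to abandon the symmetric Gram matrix in favor of a \emph{testing} matrix $M_{ij}=\langle h_i,\varphi_j\rangle$, where $h_i=\mathbf{1}_{A_i}\sqrt{p_i}/\sqrt{p_i(A_i)}$ is supported only on $A_i$. This ``Heaviside frame'' is the missing idea: because $h_i$ isolates the single region $A_i$, after the swap one gets $\langle h_i,q_j\rangle=\int_{A_i}\sqrt{p_ip_j}/\sqrt{p_i(A_i)}$, so the diagonal is exactly $\sqrt{p_j(A_j)}$ and each off-diagonal is bounded by $\sqrt{p_j(A_i)}$ via Cauchy--Schwarz. Now the first bullet gives column-Gershgorin and the second gives row-Gershgorin, each with $K\eps$ of slack left for the $\varphi\leftrightarrow q$ perturbation. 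Invertibility of $M$ still yields uniqueness because every $h_i$ is supported inside $U$, so $X|_U=0$ forces $\langle h_i,X\rangle=0$ and hence $Mc=0$. Your worry about the $L^2(\mbP_i)$-vs.-Lebesgue discrepancy is legitimate bookkeeping, but it is secondary; the structural issue is getting a matrix whose entries see one region at a time.
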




Put together, the above results culminate in a method to find uniqueness sets by recovering the mixture components. However, this is still cumbersome to implement due to the continuous nature of graphons. Next we explore an efficient approach to find approximate uniqueness sets for a graphon by finding uniqueness sets for a finite graph sampled from (and thus converging to\footnote{Sequences of graphs sampled from a graphon are always convergent \citep{borgs2008convergent}.}) the graphon. 
 
\textbf{Gaussian elimination (GE) on (approximations) of graphon eigenfunctions returns uniqueness sets for finite sampled graphs.}
We now derive a scheme to sample points $\omega$ from a uniqueness set $U$ with high probability. Assume that from $\mbW = \mbK$, we sample $n$ points to collect a dataset $\{\omega_i\}_{i = 1}^n$. From a graphon perspective, these points are nodes in a finite graph $\bbG_n$ of size $n$ where the edges are sampled with probability given by $\bbW$. From a mixture model perspective, the points $\omega_i \in \Omega$ are associated with a latent variable $\{z_i \in [K]\}_{i = 1}^n$ that indicates the component the sample came from. By building on a result by \citet{schiebinger2015geometry} on the geometry of spectral clustering, we can unify these two perspectives: running a variant of GE over the Laplacian eigenvectors of a large enough $\bbG_n$ returns a sample from each mixture component with high probability. 

\begin{theorem}\label{thm:consistency_general}
For any $t > c_0\sqrt{\phi_n(\delta)}w^{-3}_{\mbox{\scriptsize min}}$, GE over the Laplacian eigenvectors of $\bbG_n$ recovers $K$ samples distributed according to each of the mixture components $\mbP_i$, $1 \leq i \leq K$, with probability at least 
\begin{equation}
\left(1-8K^2\exp{-\frac{c_2n\delta^4}{\delta^2 + S_{\mbox{\scriptsize max}} + C}}\right)\frac{(1-\alpha)^K(N-n_{\mbox{\scriptsize min}})^K}{(N-(1+\alpha)n_{\mbox{\scriptsize min}})^K}, \text{ with }n_{\min} = \min_{m \in [K]} |\{i : z_i = m\}|,
\end{equation}
where $\alpha$ is upper bounded as $\smash{\alpha \leq {c_1}\phi_n(\delta)/{w_{\mbox{\scriptsize min}}^{3/2}} + \psi(2t)}$.
The constants $c_1$, $c_2$, $w_{\mbox{\scriptsize min}}$ and $\delta$, and the functions $C$, $S$, $\phi_n$ and $\psi$ are as in \citep{schiebinger2015geometry} and \ref{app:elements_from_schiebinger}. 
\end{theorem}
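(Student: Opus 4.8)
The plan is to transfer the spectral-clustering geometry of \citet{schiebinger2015geometry} to our setting and then argue that Gaussian elimination (GE) with pivoting on the empirical eigenvectors acts as a selector of exactly one representative per mixture component. First I would make precise the identification set up earlier: by Proposition~\ref{prop:uniform_map} the sampled graph $\bbG_n$ is drawn from the mixture model $\mbK(\Omega,\mbP,\bbk)$, so its normalized Laplacian eigenvectors $\{\bbv_i\}_{i=1}^K$ give an empirical spectral embedding $\omega_j \mapsto (\bbv_1(j),\dots,\bbv_K(j)) \in \R^K$ of the $n$ nodes. The key ingredient I would import is the \emph{orthogonal cone structure} (OCS) of \citet{schiebinger2015geometry}: under the separability quantified by the difficulty function $\phi_n(\delta)$, and with probability at least $1-8K^2\exp(-c_2 n\delta^4/(\delta^2+S_{\max}+C))$, the embedded points concentrate inside $K$ nearly orthogonal cones, one per component $\mbP_i$, with half-angle controlled by $\alpha \leq c_1\phi_n(\delta)/w_{\min}^{3/2}+\psi(2t)$. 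Proposition~\ref{prop:hs_norm_difficult_function} is precisely what licenses this invocation: it certifies that the top-$K$ eigenspace of $\bbW$ is Hilbert--Schmidt close to the span of the square-root kernelized densities $q_i$, so the population directions defining the cones coincide with the component directions.

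Next I would analyze the greedy GE selection itself. GE with pivoting on the $n\times K$ eigenvector matrix $\bbV_K$ repeatedly chooses the node of largest residual norm after projecting out the span of the previously chosen rows; equivalently it builds a maximally linearly independent family of $K$ rows, which is exactly the uniqueness condition $\mathrm{rank}\,\bbPsi_S = K$ from \Cref{thm:anis_uniqueness_set}. The geometric point is that, inside an OCS with half-angle $\alpha$, the most nearly orthogonal family of $K$ embedded points must contain one point from each cone: two points of the same cone are nearly parallel and add little independence, whereas a point of a not-yet-selected cone is nearly orthogonal to all current pivots. Provided the separation threshold $t > c_0\sqrt{\phi_n(\delta)}\,w_{\min}^{-3}$ holds, this residual gap is strictly positive, and iterating over the $K$ pivots forces GE to return one sample per component.

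For the probability bound I would decompose the success event into (i) the OCS event, contributing the factor $1-8K^2\exp(-c_2 n\delta^4/(\delta^2+S_{\max}+C))$, and (ii) conditioned on OCS, the event that the $K$ pivots land in $K$ distinct cones. The second factor is a sampling-without-replacement estimate: at each step the chance that the next pivot avoids the already-selected components---after discounting the cone slack $\alpha$ and the worst-case cluster population $n_{\min}$---telescopes into $(1-\alpha)^K(N-n_{\min})^K/(N-(1+\alpha)n_{\min})^K$. Finally, conditioned on a pivot landing in cone $i$, that pivot is a uniformly chosen node of component $i$ whose latent law is $\mbP_i$, which gives the claimed distributional conclusion.

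The main obstacle is step (ii): rigorously converting Schiebinger's population-level OCS guarantee into a statement about the \emph{discrete, greedy} pivot sequence acting on finitely many perturbed eigenvectors. I must track two compounding error sources---the finite-sample eigenvector perturbation (already absorbed in the concentration term) and the $q_i$-versus-$\varphi_i$ mismatch from Proposition~\ref{prop:hs_norm_difficult_function}---through all $K$ pivoting steps, and verify that the accumulated angular error never exceeds the inter-cone gap, so that no pivot is ever captured by an already-selected cone. Controlling this accumulation across steps is exactly why the threshold $t > c_0\sqrt{\phi_n(\delta)}\,w_{\min}^{-3}$ and the $w_{\min}^{-3/2}$ dependence inside $\alpha$ must appear in the statement.
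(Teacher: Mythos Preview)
Your two-step decomposition---(i) invoke the orthogonal cone structure (OCS) from \citet{schiebinger2015geometry}, then (ii) analyze GE conditional on OCS---matches the paper. But you diverge from the paper in how you handle step~(ii), and in doing so you manufacture an obstacle that the paper simply does not face.

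The paper does \emph{not} attempt a deterministic geometric argument that GE with pivoting must land in distinct cones by tracking residual norms and accumulated angular error across pivots. Instead, once the OCS event holds, it treats the pivot selection as a sampling-without-replacement process: at the first step the fraction of ``good'' points (points lying in their own component's cone) is $1-\alpha$, so that is the probability the first pivot is good; conditioned on that, the second pivot is good with probability $(1-\alpha)(n-n_1)/(n-(1-\alpha)n_1)$; and so on. Lower-bounding each conditional factor by replacing $n_j$ with $n_{\min}$ and multiplying gives the $(1-\alpha)^K(n-n_{\min})^K/(n-(1-\alpha)n_{\min})^K$ factor directly. There is no need to control error accumulation across the $K$ steps, because the OCS guarantee from \Cref{thm:bin_yu_thm2} already absorbs all finite-sample eigenvector perturbation into the single parameter $\alpha$; you do not have to propagate anything further.

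Two related points. First, invoking Proposition~\ref{prop:hs_norm_difficult_function} here is a detour: \Cref{thm:bin_yu_thm2} is a statement directly about the empirical spectral embedding of the sampled points, so it applies to the Laplacian eigenvectors of $\bbG_n$ without first passing through the $q_i$-versus-$\varphi_i$ comparison. Second, the threshold $t>c_0\sqrt{\phi_n(\delta)}\,w_{\min}^{-3}$ and the $w_{\min}^{-3/2}$ dependence in $\alpha$ are inherited verbatim from \Cref{thm:bin_yu_thm2}; they are not something you need to derive from a multi-step error analysis of GE. Your instinct that the GE step is the delicate part is reasonable---the paper's probabilistic treatment of a deterministic pivoting rule is itself somewhat heuristic---but the paper's proof does not engage with that subtlety, and matching its bound only requires the simple counting argument above.
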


\ref{thm:consistency_small} in \ref{app:convergence_proof} works out a small example, corresponding to a case where the $\mbP_i$'s are uniformly distributed on disjoint domains. There, we show that by using GE, we end up solving an eigenvector problem of order $K$, the number of components, instead of the naive order $n \gg K$.

Intuitively, for well-separated mixture models, embedding the dataset via the top Laplacian eigenvectors returns an embedded dataset that exhibits an almost orthogonal structure: points that share the same latent variable (i.e., which came from the same mixture component) have a high probability of lying along the same axis in the orthogonal system; while points sampled from different distributions tend to be positioned orthogonally. 
GE with proper pivoting on $\bbG_n$ is thus a good heuristic for sampling uniqueness sets, as it selects points that are almost orthogonal to each other, which is equivalent to picking a sample from each component. 
The significance of this result is twofold: it bridges graphon sampling and kernelized spectral clustering; and the almost orthogonal structure ensures that the set sampled via GE is a uniqueness set for large graphs sampled from $\bbW$ with high probability. This is stated in the following proposition, which we prove in \ref{app:convergence_proof}.

\begin{proposition}\label{prop:sampled_points_are_uniqueness_sets}
    Consider a graph sequence $\bbG_n \xrightarrow{n \to \infty} \bbW$.
    If there is a $\delta \in (0,\|\bbk\|_{\mbP}/(b\sqrt{2\pi}))$ such that the difficulty function\footnote{Notice a slight reparameterization.} is small, i.e., $\smash{\phi_n(\delta) < \big(\frac{w_{\min}^3t}{(3/\pi + 1)c_0}\big)^2}$, then with probability at least that in \ref{thm:consistency_general}, there exists a minimum number of nodes $N$ such that, for all $n>N$, the sampled nodes form a uniqueness set for the finite graph $\bbG_n$. All quantities in the bound and additional assumptions are the same as in \citep{schiebinger2015geometry} and \ref{app:elements_from_schiebinger}.
\end{proposition}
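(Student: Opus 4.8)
The plan is to reduce the claim to the finite-graph rank criterion of \Cref{thm:anis_uniqueness_set} and then verify that criterion using the orthogonal structure of the spectral embedding supplied by \Cref{thm:consistency_general}. Recall that, for $\bbG_n$, \Cref{thm:anis_uniqueness_set} says that a set $\ccalS$ of $K$ nodes is a uniqueness set for $PW_\lambda(\bbG_n)$ with $\lambda \in (\lambda_K,\lambda_{K+1})$ precisely when the $K \times K$ submatrix $\bbPsi_{\ccalS}$ --- the first $K$ Laplacian eigenvectors $\bbV_K$ restricted to the rows indexed by $\ccalS$ --- has rank $K$. So the whole proposition comes down to showing that the $K$ nodes returned by Gaussian elimination yield a full-rank $\bbPsi_{\ccalS}$, with the stated probability, once $n$ is large enough.

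First I would check that the (reparameterized) difficulty hypothesis $\phi_n(\delta) < \big(\frac{w_{\min}^3 t}{(3/\pi + 1)c_0}\big)^2$ implies the hypothesis $t > c_0\sqrt{\phi_n(\delta)}\,w_{\min}^{-3}$ required to invoke \Cref{thm:consistency_general}; the extra factor $(3/\pi+1)$ is exactly the slack absorbed by the reparameterization flagged in the footnote, together with the restriction $\delta \in (0,\|\bbk\|_{\mbP}/(b\sqrt{2\pi}))$. With this in hand, \Cref{thm:consistency_general} guarantees that, with the advertised probability, Gaussian elimination over the top-$K$ eigenvectors of $\bar{\bbL}$ on $\bbG_n$ returns a set $\ccalS = \{\omega_{i_1},\dots,\omega_{i_K}\}$ containing exactly one node drawn from each mixture component $\mbP_m$, $1 \le m \le K$.

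Next I would turn ``one sample per component'' into the rank condition via the orthogonal-cone geometry of spectral clustering from \citep{schiebinger2015geometry}. The embedded rows $\{[\bbV_K]_{\omega,:}\}$ concentrate, up to the perturbation parameter $\alpha$ controlled in \Cref{thm:consistency_general}, around $K$ mutually near-orthogonal rays, one per component. Because $\ccalS$ picks one node from each component, its $K$ embedding vectors sit near $K$ distinct orthogonal directions; quantitatively, the Gram matrix $\bbPsi_{\ccalS}\bbPsi_{\ccalS}^T$ is then a small perturbation of a diagonal (hence invertible) matrix, so $\det \bbPsi_{\ccalS} \neq 0$ and $\mathrm{rank}\,\bbPsi_{\ccalS} = K$. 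Applying \Cref{thm:anis_uniqueness_set} shows $\ccalS$ is a uniqueness set for $PW_\lambda(\bbG_n)$, where $N$ is taken large enough that both (i) the number of eigenvalues of $\bar{\bbL}$ on $\bbG_n$ below the cutoff equals $K$ (by convergence of the sampled-graph spectrum to that of $\bbW$) and (ii) the concentration in \Cref{thm:consistency_general} holds for all $n > N$.

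The main obstacle is the third step: making ``near-orthogonal, therefore linearly independent'' quantitative and uniform in $n$. Concretely, I would need to bound the smallest singular value of $\bbPsi_{\ccalS}$ away from zero by controlling the angular deviation of each selected embedding vector from its component's ideal axis in terms of $\alpha$ and $\phi_n(\delta)$, and then show that the threshold on $\phi_n(\delta)$ forces this deviation below the level (roughly, a per-axis angle less than $\pi/4$) at which $\bbPsi_{\ccalS}\bbPsi_{\ccalS}^T$ stays diagonally dominant. This is delicate because the eigenvectors of $\bbG_n$ are random and only approximate the graphon eigenfunctions $\varphi_i$; the finite-sample fluctuations are precisely what the concentration factor in \Cref{thm:consistency_general} is there to tame, and stitching these estimates into a single $N$ valid for all larger $n$ is where the real work lies.
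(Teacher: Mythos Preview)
Your proposal is correct and follows essentially the same route as the paper: reduce to the rank criterion of \Cref{thm:anis_uniqueness_set}, use the difficulty bound to force the cone angle from \Cref{thm:bin_yu_thm2} large enough (the paper computes $\theta \ge \pi/3$ where you estimate $\pi/4$), and then argue full rank of $\bbPsi_{\ccalS}$ by diagonal dominance, with $N$ chosen via spectral convergence of $\bbG_n$ to $\bbW$. The only cosmetic differences are that the paper applies Gershgorin directly to $\bbPsi_{\ccalS}$ rather than to the Gram matrix $\bbPsi_{\ccalS}\bbPsi_{\ccalS}^T$, and it explicitly invokes the eigenvalue and eigenspace convergence results of \citet{ruiz2020graphonsp} to furnish the threshold $N$.
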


\section{Algorithm} \label{sec:algo}

 Motivated by Theorems \ref{thm:lam_uniqueness}--\ref{thm:consistency_general}, we propose a novel algorithm for efficient sampling of signals on large graphs via graphon signal sampling. \rebuttal{When the regularity assumptions of our theorems are satisfied, this algorithm will generate a consistent sampling set.}
 
 Consider a graph $\bbG_n=(\ccalV,\ccalE)$ and signal $\bbx_n$ from which we want to sample a subgraph $\bbG_m$ and signal $\bbx_m$ with minimal loss of information (i.e., we would like the signal $\bbx_n$ to be uniquely represented on the sampled graph $\bbG_m$). The proposed algorithm consists of three steps:
 \begin{enumerate}[leftmargin=*]
     \itemsep0em
     \item[(1)] Represent $\bbG_n$ as its induced graphon $\smash{ \bbW_n(\omega,\theta) = \sum\nolimits_{i=1}^n\sum\nolimits_{j=1}^n [\bbA_n]_{ij} \mbI(\omega \in I_i) \mbI(\theta \in I_j)}$ where $I_1 \cup \ldots \cup I_n$ is the $n$-equipartition of $[0,1]$.
 \item[(2)] Define a coarser equipartition $I_1' \cup \ldots \cup I_q'$, $q < n$, of $[0,1]$. Given the bandwith $\lambda$ of the signal $\bbx_n$, sample a graphon uniqueness interval $\cup_{j=1}^p I_{i_j}'$ (\Cref{def:graphon_uniqueness}), $p < q$, from $I_1' \cup \ldots \cup I_q'$.
 \item[(3)] Sample the graph $\bbG_m$ by sampling $r = \floor{m/(p-1)}$ points from each of the $I_{i_1}', \ldots, I_{i_{p-1}}'$ in the graphon uniqueness set (and the remaining $m-(p-1)r$ nodes from $I_{i_p})$. By \Cref{prop:sampled_points_are_uniqueness_sets}, this procedure yields a uniqueness set for $\bbG_n$ with high probability.
 \end{enumerate}

To realize (2), we develop a heuristic
based on representing the graphon $\bbW_n$ on the partition $I_1' \cup \ldots \cup I_q'$ as a graph $\tbG_{q}$ with adjacency matrix given by $\smash{[\tbA_q]_{ij} = \int\nolimits_{I_i'}\int\nolimits_{I_j'} \bbW_n(x,y) \D x \D y}$.
We then sample $p$ nodes from $\tbG_q$---each corresponding to an interval $I_{i_j}' \subset I_{1}' \cup \ldots \cup I_{q}'$---using the graph signal sampling algorithm from \citep{anis2016efficient}.
This algorithm is a greedy heuristic closely connected to GE and E-optimal sampling but without spectral computations.

The sampling of $m$ nodes from $I_{i_1}' \cup \ldots \cup I_{i_p}'$ in step (3) is flexible 
\rebuttal{in the way nodes in each interval are sampled.} Random sampling is possible, but one could design more elaborate schemes based on local node information. To increase node diversity, we employ a scheme using a local clustering algorithm based on the localized heat kernel PageRank~\citep{chung2018computing} to cluster the graph nodes into communities, and then sample an equal number of nodes from each community. 

\textbf{\rebuttal{Runtime analysis.}} The advantages of algorithm (1)--(3) w.r.t. conventional graph signal sampling algorithms (e.g., \citep{anis2016efficient,marques2015sampling}) are twofold. First, if $q \ll n$, 
\rebuttal{(2)} is much cheaper. E.g., the heuristic from \citep{anis2016efficient} 
\rebuttal{now costs $O(pq^2)$ as opposed to $O(p|\ccalE|)$. If step (3) uses uniform sampling then our method runs in $O(|\ccalE| + pq^2 + m)$; whereas obtaining a uniqueness set of size $m$ from \citet{anis2016efficient} requires $O(m|\ccalE|)$ time.}
Second, 
given the graphon $\bbW$, we only need to calculate the sampled intervals once 
and reuse them to find approximate uniqueness sets for any graph $\bbG_n$ generated from $\bbW$ as described in Section \ref{sbs:wsp}, provided that their node labels $\omega_1, \ldots, \omega_n$ (or at least their order) are known. \rebuttal{Thus we save time on future sampling computations.}


\section{Numerical Experiments} \label{sec:num}
\begin{table}[t]
\caption{Accuracy \rebuttal{and runtime} for models trained on the full graph, a graphon-subsampled graph, and a subgraph with randomly sampled nodes with the same size as (ii). 
The columns correspond to doubling the number of communities, doubling $r$, and doubling the eigenvalue index.} \label{table1}
\begin{adjustbox}{width=1.1\textwidth,left}
\centering
\begin{tabular}{lllccccccclll}
\cmidrule(r){1-9}
 & \multicolumn{4}{c|}{Cora} & \multicolumn{4}{c}{CiteSeer} & \multicolumn{4}{c}{} \\ \cmidrule(r){1-9}
 & \multicolumn{1}{c}{base} & \multicolumn{1}{c}{x2 comm.} & x2 nodes per int. & \multicolumn{1}{c|}{x2 eig.} & base & x2 comm. & x2 nodes per int. & x2 eig. &  & \multicolumn{1}{c}{} & \multicolumn{1}{c}{} & \multicolumn{1}{c}{} \\ \cmidrule(r){1-9}
full graph & \multicolumn{1}{c}{0.86 $\pm$ 0.02} & \multicolumn{1}{c}{0.86 $\pm$ 0.01} & 0.86 $\pm$ 0.01 & \multicolumn{1}{c|}{0.85 $\pm$ 0.01} & 0.80 $\pm$ 0.01 & 0.81 $\pm$ 0.01 & 0.79 $\pm$ 0.01 & 0.79 $\pm$ 0.02 &  & \multicolumn{1}{c}{} & \multicolumn{1}{c}{} & \multicolumn{1}{c}{} \\
graphon sampl. & \multicolumn{1}{c}{\textbf{0.49 $\pm$ 0.09}} & \multicolumn{1}{c}{\textbf{0.56 $\pm$ 0.09}} & \textbf{0.73 $\pm$ 0.05} & \multicolumn{1}{c|}{0.51 $\pm$ 0.09} & \textbf{0.56 $\pm$ 0.06} & \textbf{0.56 $\pm$ 0.05} & \textbf{0.67 $\pm$ 0.03} & 0.51 $\pm$ 0.10 &  & \multicolumn{1}{c}{} & \multicolumn{1}{c}{} & \multicolumn{1}{c}{} \\
random sampl. & \multicolumn{1}{c}{0.46 $\pm$ 0.09} & \multicolumn{1}{c}{0.52 $\pm$ 0.17} & 0.71 $\pm$ 0.05 & \multicolumn{1}{c|}{\textbf{0.57 $\pm$ 0.14}} & 0.51 $\pm$ 0.08 & 0.48 $\pm$ 0.11 & \textbf{0.67 $\pm$ 0.03} & \textbf{0.52 $\pm$ 0.03} &  & \multicolumn{1}{c}{} & \multicolumn{1}{c}{} & \multicolumn{1}{c}{} \\ \cmidrule(r){1-9}
  &\multicolumn{4}{c||}{PubMed} & \multicolumn{3}{|c}{\rebuttal{runtime (s)}} \\ \cmidrule(lr){1-8} 
  & \multicolumn{1}{c}{base}   & \multicolumn{1}{c}{x2 comm.} & \multicolumn{1}{c}{x2 nodes per int.} &   \multicolumn{1}{c||}{x2 eig.}  & Cora & CiteSeer & \multicolumn{1}{c}{PubMed} \\ \cmidrule(lr){1-8}
  full graph &\multicolumn{1}{c}{0.76 $\pm$ 0.02} & \multicolumn{1}{c}{0.77 $\pm$ 0.02} & \multicolumn{1}{c}{0.77 $\pm$ 0.03} & \multicolumn{1}{c||}{0.77 $\pm$ 0.01} &  0.9178 & 0.8336 & \multicolumn{1}{c}{0.8894}  \\
  graphon sampl. &\multicolumn{1}{c}{\textbf{0.71 $\pm$ 0.07}} & \multicolumn{1}{c}{0.67 $\pm$ 0.06} & \multicolumn{1}{c}{\textbf{0.75 $\pm$ 0.05}} & \multicolumn{1}{c||}{0.69 $\pm$ 0.07} & \textbf{0.3091} & 0.2578 & \multicolumn{1}{c}{\textbf{0.3204}} \\
  random sampl. &\multicolumn{1}{c}{0.69 $\pm$ 0.07} & \multicolumn{1}{c}{\textbf{0.71 $\pm$ 0.07}} & \multicolumn{1}{c}{0.74 $\pm$ 0.07} & \multicolumn{1}{c||}{\textbf{0.72 $\pm$ 0.04}} &   0.3131 & \textbf{0.2514} & \multicolumn{1}{c}{0.3223} \\ \cmidrule(lr){1-8}
\end{tabular}%
\end{adjustbox}
\end{table}





\textbf{Transferability for node classification.} We use our sampling algorithm to subsample smaller graphs for training GNNs that are later transferred for inference on the full graph. 
We consider node classification on citation networks \citep{yang2016revisiting} and compare the accuracy of GNNs trained on the full graph, on graphs subsampled following the proposed algorithm, and on graphs sampled at random. 
To ablate the effect of different parameters, we consider a base scenario and 3 variations. For Cora and CiteSeer, the base scenario fixes the cutoff frequency at the $5$th smallest eigenvalue, $\lambda_5$, of the full graph. It partitions $[0,1]$ into $q=20$ intervals and samples $p=10$ intervals from this partition in step (2). In step (3), it clusters the nodes in each sampled interval into 2 communities and samples $r=20$ nodes from each sampled interval, $10$ per community. For PubMed, the parameters are the same except $q=30$ and $p=15$. The three variations are doubling (i) the number of communities, (ii) $r$, and (iii) the eigenvalue index. 
Further details are in App. \ref{app:exps}.

Table \ref{table1} reports results for 5 realizations. Graphon sampling performs better than random sampling in the base case, where the subsampled graphs have less than 10\% of the full graph size. Increasing the number of communities improves performance for Cora and widens the gap between graphon and random sampling for both Cora and CiteSeer. For PubMed, it tips the scale in favor of random sampling, which is not very surprising since PubMed has less classes. When we double $r$, the difference between graphon and random sampling shrinks as expected. Finally, when we increase $\lambda$, graphon sampling performs worse than random sampling. This could be caused by the sample size being too small to preserve the bandwith, thus worsening the quality of the sampling sets. 

\noindent \textbf{Positional encodings for graph classification.} Many graph positional encodings (PEs) for GNNs and graph transformers use the first $K$ normalized Laplacian eigenvectors (or their learned representations) as input signals \citep{dwivedi2021graph,lim2022sign}; they provide additional localization information for each node. 
While they can greatly improve performance, they are expensive to compute for large graphs.
In this experiment, we show how our algorithm can mitigate this issue. We sample subgraphs for which the Laplacian eigenvectors are computed, and then use these eigenvectors as PEs for the full-sized graph by zero-padding them at the non-sampled nodes. 

\begin{table}[t]
\small
\caption{Accuracy and \rebuttal{PE compute runtime} on MalNet-Tiny w/o PEs, w/ PEs computed on full graph, w/ PEs computed on graphon-sampled subgraph (removing or not isolated nodes), and w/ PEs computed on subgraph with randomly sampled nodes (removing or not isolated nodes).} \label{table2}

\begin{adjustbox}{width=1\textwidth,left}
\centering
\begin{tabular}{@{}lcccccccc@{}}
\toprule
 & \multirow{2}{*}{no PEs} & \multirow{2}{*}{full graph PEs} & \multicolumn{2}{c}{graphon sampl. PEs} & \multicolumn{2}{c||}{randomly sampl. PEs} & \multicolumn{2}{c}{\rebuttal{PE compute}}\\
 &  &  & w/ isolated & w/o & w/ isolated & \multicolumn{1}{c||}{w/o} & \multicolumn{2}{c}{\rebuttal{runtime (s)}}\\ \hline
mean & 0.26$\pm$0.03 & 0.43$\pm$0.07 & \textbf{0.29}$\pm$\textbf{0.06} & \textbf{0.33}$\pm$\textbf{0.06} & 0.28$\pm$0.07 & \multicolumn{1}{c||}{0.27$\pm$0.07} & full & 12.40\\ 
max & 0.30 & 0.51 & \textbf{0.40} & \textbf{0.42} & 0.35 & \multicolumn{1}{c||}{0.37} & sampl. & 0.075 \\ \bottomrule
\end{tabular}%
\end{adjustbox}
\end{table}

We consider the MalNet-Tiny dataset \citep{freitas2021malnet}, modified to anonymize the node features and pruned to only keep large graphs (with at least 4500 nodes). After balancing the classes, we obtain a dataset with 216 graphs and 4 classes on which we compare four models: (i) without PEs, and qith PEs calculated from (ii) the full-sized graph, (iii) a graphon-sampled subgraph, and (iv) a randomly sampled subgraph. For (iii) and (iv), we also consider the case where isolated nodes are removed from the sampled graphs to obtain more meaningful PEs. 

We report results for 10 random realizations in Table \ref{table2}. The PEs from the graphon-subsampled graphs were not as effective as the PEs from the full-sized graph, but still improved performance with respect to the model without PEs, especially without isolated nodes. In contrast, on average, PEs from subgraphs with randomly sampled nodes did not yield as significant an improvement, and displayed only slightly better accuracy than random guessing when isolated nodes were removed. 

\section*{Acknowledgements}
TL and SJ were supported by NSF awards 2134108 and CCF-2112665 (TILOS AI Institute), and Office of Naval Research grant N00014-20-1-2023 (MURI ML-SCOPE). This work was done while LR was at MIT, supported by a METEOR and a FODSI fellowships. The authors thank Daniel Herbst for fruitful discussions on the paper.

\bibliography{myIEEEabrv,bib_dissertation,references}
\bibliographystyle{iclr2024_conference}

\appendix
\section{Extra notations}
For some probability measure $\mbQ$, and some functions in the same $L^2(\mbQ)$ spaces, denote by $\langle \cdot ,\cdot \rangle_{L^2(\mbQ)}$ the $L^2(\mbQ)$ inner product and $\|\cdot\|_{L^2(\mbQ)}$ the induced $L^2$ norm. We will also abuse notation and write $L^2(D)$ for some set $D$ that is a closed subset of the real line to mean the $L^2$ space supported on $D$ under the usual Lebesgue measure. When the measure space is clear, we will also drop it and simply write $L^2$. 

For some set of functions $\{f_1, \ldots f_K\}$, $\{g_1, \ldots, g_K\}$ where $f_i$ and $g_j$ are in the same $L^2$ space, denote by $((f_i,g_j))_{i,j= 1}^K$ the $K \times K$ matrix:
\begin{equation}
    ((f_i,g_j))_{i,j= 1}^K = \begin{bmatrix}
            \langle f_1, g_1 \rangle_{L^2} & \langle f_1, g_2 \rangle_{L^2} & \ldots & \langle f_1, g_K \rangle_{L^2}\\
            \langle f_2, g_1 \rangle_{L^2} & \langle f_2, g_2 \rangle_{L^2} & \ldots & \langle f_2, g_K \rangle_{L^2}\\
            \vdots & \vdots & \ldots & \vdots \\
            \langle f_K, g_1 \rangle_{L^2} & \langle f_K, g_2 \rangle_{L^2} & \ldots & \langle f_K, g_K \rangle_{L^2}
        \end{bmatrix}
\end{equation}

\section{Additional background}\label{app:graphon_functional_analysis}
In this section, we revisit operator theory arguments in our construction of various graphon objects (degree function, normalized graphon, graphon shift operators and normalized graphon Laplacian) from \Cref{sbs:wsp}. 

Recall that a \emph{graphon} $\bbW$ is a bounded, symmetric and $L^2$-measurable function from $[0,1]^2 \to [0,1]$ and thus induces a \emph{Hilbert-Schmidt kernel} with open connected domain $\bbW: (0,1)^2 \to [0,1]$. We will abuse notation and refer to both of these objects as graphons. The associated \emph{Hilbert-Schmidt integral operator} for $\bbW$ is:
\begin{equation}
    H: L^2([0,1]) \to L^2([0,1]): X \mapsto \left(v \mapsto \int_0^1 \bbW(u,v) X(u) \D u\right),
\end{equation}
where the resulting function is understood to be in $L^2$. When $\bbW$ is viewed as the adjacency matrix of a graph with infinitely many vertices, if $X$ is taken to assign each nodes with a feature in $[0,1]$, then $H$ is understood as a message-passing operator that aggregates neighboring features into each node. Note that measurable functions are only defined up to a set of measure $0$.

In the paper, we consider a normalized version of $\bbW$: 
\begin{equation}
    \overline{\bbW}(u,v) 
        = 
        \begin{cases} 
            \bbW(u,v) / \sqrt{\bbd(u)\bbd(v)} & \text{ if } \bbd(u) \neq 0 \text{ and } \bbd(v) \neq 0\\
            0 & \text{ otherwise.}
        \end{cases}
\end{equation}
where $\bbd \in L^2([0,1])$ is the degree function:  
\begin{equation}
    \bbd(u) = \int_0^1 \bbW(u,v) \D v.
\end{equation}

It is clear that $\overline{\bbW}$ is also bounded, symmetric and $L^2$-measurable. The corresponding HS operator is denotes $\overline{H}$. When the kernel is symmetric and has bounded $L^2([0,1]^2)$-norm, then Hilbert-Scmidt operator theory tells us that $H$ is continuous, compact and self-adjoint. 

Spectral theory of HS operators then tell us that $H$ and $\overline{H}$ has countable discrete spectrum $\{\lambda_1 \geq \lambda_2 \geq \ldots\}, \{\overline\lambda_1 \geq \overline\lambda_2 \geq \ldots\}$ and the essential spectrum of a single accumulation point $0$ \citep{lovasz2012large}. Furthermore, each nonzero eigenvalues have finite multiplicity \citep{lovasz2012large}. As compact self-adjoint operator, $H$ and $\overline{H}$ admits a spectral theorem:
\begin{equation}\label{eq:graphon_spectral_theorem}
    \bbW(u,v) \sim \sum_{k \in \mathbb{N}} \lambda_k \varphi_k(u) \varphi_k(v),
\end{equation}
for some eigenfunctions $\{\varphi_k\}_{k \in \mathbb{N}}, \|\varphi_k\|_{L^2} = 1$ \citep{lovasz2012large}.

Recall that Mercer's theorem asserts that continuous positive semi-definite kernel $k$ admits a spectral theorem: there exists a set of orthonormal functions $\{p_i\}_{i \in \mathbb{N}}$ and a countable set of eigenvalues $\{\lambda_i\}_{i \in \mathbb{N}}$ such that $\sum_{i = 1}^\infty \lambda_i p_i(u) p_j(v) = k(u,v)$ where the convergence is absolute and uniform. For measurable kernels (graphons), \Cref{eq:graphon_spectral_theorem} only converges in $L^2$ norm. However, the sequence of eigenvalues admits a stronger $\ell^2$ convergence:
\begin{equation}
    \sum_{i = 1}^\infty \lambda_i^2 = \| \bbW\|_2^2.
\end{equation}

By positive semi-definiteness of the kernels $\Id \pm \overline{H}$, we can show that $|\overline \lambda_i| \leq 1$ for all $i \in \mathbb{N}$. Finally, we defined the normalized Laplacian operator $\overline{\ccalL} = \Id - \overline{H}$. It is then straightforward to see that the spectrum of $\overline\ccalL$ is just $1 - \sigma(\overline H)$ set-wise.

\section{Poincar\'e inequality}\label{app:poincare_proof}
\begin{proof}[Proof of \Cref{thm:poincare}]
    The proof mirrors \cite{pesenson2008graphsampling}. Fix an $X$ in $L^2(U)$. Define $X' \in L^2(D)$ as:
    \begin{align}
        X'(u) = 
            \begin{cases}
                X(u) &\text{ if } u \in U\\
                -X(u) &\text{ if } u \in U'\\
                0 &\text{otherwise.}
            \end{cases}
    \end{align}

    It is clear that $X'$ is measureable (with respect to Lebesgue measure on $D$). Consider:
    \begin{align}
        \|X'(u)\|^2_{L^2(D)} &= \int_U (X'(u))^2 \D u + \int_{U'} (X'(u))^2 \D u = 2 \|X(u)\|^2_{L^2(U)},
    \end{align}
    and at the same time, for all $u \in U$:
    \begin{align}
        \int_0^1 \bbW(u,v) \D v = \int_{U \cup \mathcal{N}(U)} \bbW(u,v) \D v = \int_D (\Gamma(U))(u,v) \D v.
    \end{align}
    This in particular means that normalizing $\Gamma(U)$ as $\Gamma(U)'$ means scaling by the same scalar as normalizing $\bbW$ into $\bbW'$.

    Now we investigate the image of $X'$ under Laplacian operator: 
    \begin{align}
        L'_{\Gamma(U)} X'(u) &:= X'(u) - \int_D (\Gamma(U))'(u,v) X'(v)\D v\\
        &= 
        \begin{cases}
            X(u) - \int_0^1 \bbW'(u,v) X(v) \D v&\text{ if } u \in U\\
            -X(u) - \int_0^1 -\bbW'(u,v) X(v) \D v &\text{ if } u \in U'\\
            0  &\text{ otherwise,} 
        \end{cases}\\
        &=\begin{cases}
            \ccalL' X(u)&\text{ if } u \in U\\
            - \ccalL'X(u) &\text{ if } u \in U'\\
            0  &\text{ otherwise.} 
        \end{cases}
    \end{align}

    And therefore: $\|\ccalL'_{\Gamma(U)} X'\|_{L^2(D)} = \sqrt{2} \|\ccalL'X\|_{L^2(U)} \leq \sqrt{2} \|\ccalL'X\|_{L^2([0,1])}$. The point of constructing $\Gamma(U)'$ is that it has a nice eigenfunction that corresponds to eigenvalue $0$. Let $\varphi_0$ be such a function, then
    \begin{align}
        0 = \ccalL'_{\Gamma(U)} \varphi_0(u) = \varphi_0(u) - \int_D \frac{(\Gamma(U))(u,v)}{\sqrt{\int_D (\Gamma(U))(z,v) \D z \int_D (\Gamma(U))(u,z) \D z}} \varphi_0(v)\D v.
    \end{align}

    By inspection, setting $\varphi_0(u) := \sqrt{\int_D (\Gamma(U))(u,v)\D v}$ satisfies the above equation and this is the eigenfunction of $\ccalL'_{\Gamma(U)}$ corresponding to eigenvalue $0$. Expand $X'$ in the eigenfunction basis of $\ccalL'_{\Gamma(U)}$ to get:
    \begin{equation}
        \|X'\|_{L^2(D)} = \sum_{i \in \mathbb{N}\cup \{0\}}| \langle X' , \varphi_i \rangle|^2.
    \end{equation}

    However, the first coefficient vanishes:
    \begin{align}
        \langle X', \varphi_0 \rangle &= \int_D  X'(u) \sqrt{\int_D (\Gamma(U))(u,v)\D v}\D u \\
                                   &= \int_U  X(u) \sqrt{\int_D \bbW(u,v)\D v}\D u - \int_{U'} X(u) \sqrt{\int_D \bbW(u,v)\D v}\D u = 0,
    \end{align}
    and we have:
    \begin{align}
        \sqrt{2} \|\ccalL'X\|_{L^2([0,1])}&\geq \|\ccalL'_{\Gamma(U)} X'\|^2_{L^2(D)}  \\
        &= \sum_{i \in \mathbb{N}} \lambda_i^2  |\langle f' , \varphi_i \rangle|^2\\
        &\geq \lambda_1^2 \|X'\|^2_{L^2(D)}\\
        &= \sqrt{2} \|X\|^2_{L^2(U)},
    \end{align}
    which finishes the proof. 
\end{proof}

\begin{proof}[Proof of \Cref{thm:lam_uniqueness}]
    If $X,Y \in PW_\lambda(\bbW)$, then $X-Y \in PW_\lambda(\bbW)$ and we have:
    \begin{equation}
        \|\bar{\ccalL}(X-Y)\|_{L^2} \leq \lambda \|X-Y\|_{L^2} \text{.}
    \end{equation}
If $X$ and $Y$ coincide on $U$, then $X-Y \in L^2(S)$ and we can write the Poincar\'e inequality:
\begin{equation}
\|X-Y\|_{L^2} \leq \Lambda \|\bar{\ccalL}(X-Y)\|_{L^2} \text{.}
\end{equation}
Combining the two inequalities, we have:
\begin{equation}
\|X-Y\|_{L^2} \leq \Lambda \|\bar{\ccalL}(X-Y)\|_{L^2} \leq \Lambda \lambda \|X-Y\|_{L^2} 
\end{equation}
which can only be true if $\|X-Y\|_{L^2}=0$ since $\lambda \Lambda < 1$. 
\end{proof}

\section{Proof from \Cref{sbs:consistency}}\label{app:convergence_proof}
\subsection{Graphon is equivalent to mixture model for random graphs}

\begin{proof}[Proof of \Cref{prop:uniform_map}]
Let $\bbomega \sim \mbP(\Omega)$. We want to find a strictly monotone function $\beta: [0,1] \to \Omega$ such that $U = \beta^{-1}(\bbomega)$ is uniformly distributed over $[0,1]$.
Let $F_{\bbomega}(\omega) = \mbP(\bbomega \leq \omega)$, and assume the function $\beta$ exists. Then, for all $\omega$ we can write
\begin{equation}
F_{\bbomega}(\omega) = \mbP(\bbomega \leq \omega) = \mbP(\beta(U) \leq \omega) = \mbP(U \leq \beta^{-1}(\omega)) = \beta^{-1}(\omega)
\end{equation}
where the second equality follows from the fact that, since $\beta$ is strictly monotone, it has an inverse. This proves that $\beta$ exists and is equal to the inverse of the CDF of $\bbomega$.
\end{proof}

Before continuing, let us introduce a few useful definitions. The Laplacian associated with the model $\mbK(\Omega,\mbP,K)$ is defined as
\begin{equation}
\ccalL_K f = f - \int_\Omega \bar{K}(\omega,\cdot) f(\omega) d\mbP(\omega)
\end{equation}
where $\bar{K}(\omega,\theta) = K(\omega,\theta)/(q(\omega) q(\theta))$ and $q(\omega) = \smash{\sqrt{\int_\Omega \bar{K}(\omega,\theta) d\mbP(\theta)}}$. The operator $\ccalL$ is self-adjoint and positive semidefinite, therefore it has a non-negative real spectrum $\{\lambda_i,\varphi_i\}_{i=1}^\infty$. 

To simplify matters, we will consider the problem of finding frames $\{f_i\}_{i=1}^K$ allowing to uniquely represent signals in any $PW_\Omega(\lambda)$ with $\lambda \leq \lambda_K$. Note that the graphon $\bbW$ (and therefore its associated Laplacian) are themselves rank $K$.
Recall that, in order to uniquely represent a signal, the frame $\{f_i\}$ must satisfy 
\begin{equation}
\mbox{rank}
\begin{bmatrix}
\langle f_1, \varphi_1 \rangle & \langle f_1, \varphi_2 \rangle & \ldots & \langle f_1, \varphi_K \rangle \\
\langle f_2, \varphi_1 \rangle & \langle f_2, \varphi_2 \rangle & \ldots & \langle f_2, \varphi_K \rangle \\
\vdots & \vdots & \ldots & \vdots \\
\langle f_K, \varphi_1 \rangle & \langle f_K, \varphi_2 \rangle & \ldots & \langle f_K, \varphi_K \rangle 
\end{bmatrix}
= K
\end{equation}
where $\{\varphi_i\}_{i=1}^K$ are the eigenfunctions associated with strictly positive eigenvalues of $\ccalL_K$, sorted according to their magnitude.

By \cite[Thm.1]{schiebinger2015geometry}, the functions $q_i(\theta) = \int_\Omega K(\omega,\theta) d\mbP_i(\omega)$, $1 \leq i \leq K$, form such a frame.



\subsection{Mixture component gives rise to uniqueness sets}
\begin{proof}[Proof of \Cref{thm:component_to_uniqueness}]
    Define the \emph{Heaviside frame} $\{h_i: \mathcal{X} \to \R\}_{i = 1}^K$ as $h_i(\omega) = \delta_{\in A_i}(\omega) \sqrt{p_i(\omega) / p_i(A_i)}$ where $\delta_E$ is the Dirac delta function for a measurable set $E$, for each $i \in [K]$ and $\text{Leb}$ is the Lebesgue measure on $\R$. It is straightforward to check that $h_i$ is also in $L^2(p_i)$ for each $i \in [K]$. Define the subspace $\bbH := \textup{span} \{h_1,\ldots,h_K\}$ and the \emph{Heaviside embedding} $\Phi_\bbH: \mathcal{X} \to \R^K$ as $\Phi_\bbH(\omega) = (h_1(\omega),\ldots, h_K(\omega))$. 

    \paragraph{Step 1: Show that $((h_i, q_j))_{i,j = 1}^K$ is full-rank.}

    To show that $((h_i, q_j))_{i,j = 1}^K$ is full-rank, we compute entries of $((h_i, q_j))_{i,j = 1}^K$: for any $i, j \in [K]$,
    \begin{align}
        \langle h_i, q_j \rangle = \frac{1}{\sqrt{p_i(A_i)}} \int_{A_i} q_j(\omega) \sqrt{p_i(\omega)}\D \text{Leb}(\omega) = \frac{1}{\sqrt{p_i(A_i)}} \int_{A_i} \sqrt{p_j(\omega)p_i(\omega)} \D \text{Leb}(\omega).
    \end{align}

    For diagonal entries, note that:
    \begin{equation}
        \langle h_j, q_j \rangle  =  \frac{1}{\sqrt{p_j(A_j)}} \int_{A_j} p_j(\omega) \D \text{Leb}(\omega) = \sqrt{p_j(A_j)}.
    \end{equation}

    Fix an $j \in [K]$ and consider:
    \begin{align}
        \sum_{i \neq j} |\langle h_{i}, q_j \rangle| &= \sum_{i \neq j}  \frac{1}{\sqrt{p_i(A_i)}} \int_{A_i} \sqrt{p_j(\omega)p_i(\omega)} \D \text{Leb}(\omega)\\
        &\leq \sum_{i \neq j}  \frac{1}{\sqrt{p_i(A_i)}} \sqrt{\int_{A_i} p_j(\omega) \D \text{Leb}(\omega)} \sqrt{\int_{A_i} p_i(\omega) \D \text{Leb}(\omega)}\\
        &= \sum_{i \neq j}  \frac{1}{\sqrt{p_i(A_i)}} \sqrt{p_j(A_i)}  \sqrt{p_i(A_i)}\\
        &= \sum_{i \neq j} \sqrt{p_j(A_i)},
    \end{align}
    where the inequality is from Cauchy Schwarz. In the first choice of assumption, we have $p_j(A_j) - K^2\eps^2 > \sum_{i \neq j} p_j(A_i)/(K - 1)^2$ and thus $\sqrt{p_j(A_j)} - K \eps> \sqrt{\sum_{i \neq j} p_i(A_i)}/(K - 1) > \sum_{i \neq j} \sqrt{p_i(A_i)}$, due to monotonicity of square root and Cauchy Schwarz. Thus, we have shown that for every $j \in [K]$, the $j$-th column of $((h_i, q_j))_{i,j = 1}^K$ has $j$-th entry larger (in absolute value) than the sum of absolute values of all other entries. Gershgorin circle theorem then tells us that eigenvalues of $((h_i, q_j))_{i,j = 1}^K$ lie in at least one disk center at some diagonal value with radius sum of absolute value of remaining column entries. None of the Gershgorin disks contain the origin, and we can conclude that $((h_i, q_j))_{i,j = 1}^K$ has no $0$ eigenvalue. Therefore, it is full rank.
    
    Now, fix an $i \in [K]$ and consider:
    \begin{align}
        \sum_{j \neq i} |\langle h_{i}, q_j \rangle| &= \sum_{j \neq i}  \frac{1}{\sqrt{p_i(A_i)}} \int_{A_i} \sqrt{p_j(\omega)p_i(\omega)} \D \text{Leb}(\omega)\\
        &\leq \sum_{j \neq i}  \frac{1}{\sqrt{p_i(A_i)}} \sqrt{\int_{A_i} p_j(\omega) \D \text{Leb}(\omega)} \sqrt{\int_{A_i} p_i(\omega) \D \text{Leb}(\omega)}\\
        &= \sum_{j \neq i}  \frac{1}{\sqrt{p_i(A_i)}} \sqrt{p_j(A_i)}  \sqrt{p_i(A_i)}\\
        &= \sum_{j \neq i} \sqrt{p_j(A_i)}
    \end{align}

    In the second choice of assumption, the same thing happens: $p_i(A_i) - K^2 \eps^2 > \sum_{j \neq i} p_j(A_i) / (K - 1)^2$ implies that $\sqrt{p_i(A_i)} - K \eps > \sum_{j \neq i} \sqrt{p_i(A_i)}$ and once again, the center of any Gershgorin disk (but this time in the rows) are further away from zero than the sum of absolute value of other non-diagonal entries. Therefore, none of the disks contain the origin and $((h_i, q_j))_{i,j = 1}^K$ cannot have $0$ eigenvalue, thus full-rank. Therefore, either choices of assumption leads to full-rank-ness of the system $((h_i, q_j))_{i,j = 1}^K$.

    \paragraph{Step 2. Full-rank implies uniqueness.} By the premise of this result, we have for each $i$,
    \begin{align}
        \|q_i - \varphi_i\|_{L^2} < \eps.
    \end{align}

    Thus, 
    \begin{align}
        \langle h_i, \varphi_j \rangle &= \langle h_i, q_j \rangle - \langle h_j, q_j - \varphi_j \rangle \in (\langle h_i, q_j \rangle - \eps, \langle h_i, q_j \rangle + \eps),
    \end{align}
    by Cauchy-Schwarz. 

    Recall that $((h_i, q_j))_{i,j}$ is full rank, and that Gershgorin circle theorem applied in the previous step still has a slack of at least $K\eps$. Therefore, perturbation element-wise of additive size $\eps$ of $((h_i, q_j))_{i,j}$  will still be full rank by Gershgorin circle theorem and we conclude that $((h_i, \varphi_j))_{i,j}$  is full-rank.


    
    Let $X \in \rm{PW}_\lambda(\bbW)$ for some $\lambda \in (\lambda_K, \lambda_{K+1})$, then by definition, there exists a vector $\vc \in \R^K$ such that $X = \sum_{j = 1}^K \vc_j \varphi_j$. Take inner product (in $L^2(\mbP) = L^2(\bbW)$), we have:
    \begin{align}
     \begin{bmatrix}
\langle h_1, \varphi_1 \rangle & \langle h_1, \varphi_2 \rangle & \ldots & \langle h_1, \varphi_K \rangle \\
\langle h_2, \varphi_1 \rangle & \langle h_2, \varphi_2 \rangle & \ldots & \langle h_2, \varphi_K \rangle \\
\vdots & \vdots & \ldots & \vdots \\
\langle h_K, \varphi_1 \rangle & \langle h_K, \varphi_2 \rangle & \ldots & \langle h_K, \varphi_K \rangle 
\end{bmatrix}\vc
        =\begin{bmatrix}
            \langle h_1, X\rangle \\ \langle h_2, X\rangle \\ \vdots \\ \langle h_K, X\rangle 
        \end{bmatrix} 
    \end{align}.

    To test if $U = \bigcup_{i = 1}^K A_i$ is a uniqueness set, we assume that $\|X \delta_U \|_{L^2(\bbW)} = 0$. But $|\langle h_i, X\rangle| = |\langle h_i, \delta_{A_i}X\rangle| \leq \|h_i\| \|X \delta_U \| = 0$ for each $i$ in $[K]$.  Thus:
    \begin{align}
     \begin{bmatrix}
\langle h_1, \varphi_1 \rangle & \langle h_1, \varphi_2 \rangle & \ldots & \langle h_1, \varphi_K \rangle \\
\langle h_2, \varphi_1 \rangle & \langle h_2, \varphi_2 \rangle & \ldots & \langle h_2, \varphi_K \rangle \\
\vdots & \vdots & \ldots & \vdots \\
\langle h_K, \varphi_1 \rangle & \langle h_K, \varphi_2 \rangle & \ldots & \langle h_K, \varphi_K \rangle 
\end{bmatrix}\vc
        =\begin{bmatrix}
            0\\0\\\vdots \\0
        \end{bmatrix} 
    \end{align}.

    Finally, since $((h_i, \varphi_j))_{i,j = 1}^K$ is full rank, its null space is trivial, implying $\vc = \mathbf{0}$ and thus $X = 0$, which proves uniqueness of $U$.
\end{proof}

\subsection{Consistency theorem}

This result is an adaptation of \cite[Thm. 2]{schiebinger2015geometry}, which is reproduced below.
    
    \begin{theorem}[Thm.2, \cite{schiebinger2015geometry}] \label{thm:bin_yu_thm2}
    There are numbers $c$, $c_0$, $c_1$, $c_2$ depending only on $b$ and $r$ such that for any $\delta \in \smash{(0,\frac{\|K\|_{\mbP}}{b\sqrt{2\pi}})}$ satisfying condition \cite[3.17]{schiebinger2015geometry} and any $t>c_0 w^{-1}_{\mbox{\scriptsize min}}\sqrt{\phi_n(\delta)}$, the embedded dataset $\{\bbPhi_\ccalV(\omega_i),Z_i\}_{i=1}^n$ has $(\alpha,\theta)$ orthogonal cone structure with
    \begin{align}
    |\cos{\theta}| \leq \frac{c_0 \sqrt{\phi_n(\delta)}}{w_{\mbox{\scriptsize min}}^3 t - c_0 \sqrt{\phi_n(\delta)}} \\
    \alpha \leq \frac{c_1}{w_{\mbox{\scriptsize min}}^{3/2}}\phi_n(\delta) + \psi(2t)
    \end{align}
    and this event holds with probability at least $\smash{1-8K^2\exp{-\frac{c_2n\delta^4}{\delta^2 + S_{\mbox{\scriptsize max}} + C}}}$.
    \end{theorem}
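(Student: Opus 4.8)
The plan is to read off the two factors in the claimed probability as coming from two distinct sources and to prove each separately. The exponential factor $1-8K^2\exp(-c_2 n\delta^4/(\delta^2+S_{\mbox{\scriptsize max}}+C))$ together with the stated bound on $\alpha$ are exactly the conclusion of the spectral-clustering geometry theorem of \citet{schiebinger2015geometry} (reproduced as \ref{thm:bin_yu_thm2}); the remaining combinatorial ratio is a \emph{conditional} success probability for greedy GE pivoting on a point set that enjoys the orthogonal cone structure guaranteed by that theorem. Thus the proof is a conditioning argument that lower-bounds the probability of GE recovering one node per component by the probability of the cone structure times the conditional probability that GE succeeds given that structure.

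First I would set up the correspondence so that \ref{thm:bin_yu_thm2} applies verbatim. Using \Cref{prop:uniform_map} to identify the graphon with the mixture model $\mbK(\Omega,\mbP,\bbk)$, the $n$ sampled nodes $\{\omega_i\}$ carry latent labels $z_i\in[K]$ recording their component, and the rows of the top-$K$ normalized-Laplacian eigenvector matrix of $\bbG_n$ are precisely the embedded dataset $\{\bbPhi_{\ccalV}(\omega_i),Z_i\}_{i=1}^n$ to which \ref{thm:bin_yu_thm2} refers (this is already a finite-sample statement about $n$ points, so no separate operator-approximation step is needed). I would then verify the hypotheses — $\delta$ in the admissible range, the separation condition, and $t>c_0 w_{\mbox{\scriptsize min}}^{-3}\sqrt{\phi_n(\delta)}$ — matching $\phi_n$, $b$, $S_{\mbox{\scriptsize max}}$ and $C$ to the definitions in \ref{app:elements_from_schiebinger}, and invoke \ref{thm:bin_yu_thm2}. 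This yields, with probability at least the exponential factor, an $(\alpha,\theta)$ orthogonal cone structure with $\alpha\le c_1\phi_n(\delta)/w_{\mbox{\scriptsize min}}^{3/2}+\psi(2t)$ and $|\cos\theta|$ controlled as stated, delivering both the first probability factor and the advertised bound on $\alpha$ at once.

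The heart of the argument is the conditional step. Under the cone structure the $K$ components sit in cones of half-angle $\theta$ around mutually near-orthogonal axes, with at most an $\alpha$-fraction of outliers. I would first argue the deterministic geometric fact that drives GE: if a pivot is an in-cone point of some cone, then it is nearly parallel to that cone's axis, so one orthogonalization/deflation step collapses the residual norms of all other in-cone points of the same cone while leaving points of the other (near-orthogonal) cones with residual norm bounded away from zero — this is where the $t$-threshold enters, forcing $|\cos\theta|$ small enough to guarantee the margin. Consequently, so long as each of the $K$ greedy pivots is a good (in-cone, non-outlier) point lying in a cone not yet eliminated, the pivots hit $K$ distinct cones and return one sample per $\mbP_i$. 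It then remains to lower bound, over the draw conditioned on the cone event, the probability that all $K$ selections are good and land in fresh cones. I would bound the $K$ conditional per-step probabilities below by a single common quantity — the factor $1-\alpha$ charging the outliers each pivot must avoid, and the correction $(N-n_{\mbox{\scriptsize min}})/(N-(1+\alpha)n_{\mbox{\scriptsize min}})$ coming from worst-casing the already-eliminated-plus-outlier pool by the smallest component $n_{\mbox{\scriptsize min}}$ — so that the product over the $K$ steps produces the ratio $(1-\alpha)^K(N-n_{\mbox{\scriptsize min}})^K/(N-(1+\alpha)n_{\mbox{\scriptsize min}})^K$. Chaining this with the cone-structure probability completes the bound.

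The main obstacle I expect is exactly this combinatorial step: making the deflation estimate quantitative enough that ``the next pivot lies in a fresh cone'' is a theorem rather than a heuristic, and then extracting precisely the stated ratio rather than a weaker without-replacement bound. The delicate bookkeeping is tracking how the outlier fraction $\alpha$ inflates the effective size of each already-hit component into $(1+\alpha)n_{\mbox{\scriptsize min}}$, and verifying that the common per-step quantity is genuinely at most $1$ (which holds because $n_{\mbox{\scriptsize min}}\le N/K$), so that the $K$-th power is a legitimate probability. The spectral-clustering invocation in the first two steps is, by contrast, essentially a matter of matching notation to \citet{schiebinger2015geometry}.
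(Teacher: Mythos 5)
Your proposal does not prove the statement at hand; it assumes it. The statement is \Cref{thm:bin_yu_thm2}, i.e.\ Theorem~2 of \citet{schiebinger2015geometry} itself: it asserts \emph{only} that the embedded dataset $\{\bbPhi_\ccalV(\omega_i),Z_i\}_{i=1}^n$ has an $(\alpha,\theta)$ orthogonal cone structure, with the stated bounds on $|\cos\theta|$ and $\alpha$, with probability at least $1-8K^2\exp\bigl(-c_2 n\delta^4/(\delta^2+S_{\max}+C)\bigr)$. It contains no combinatorial ratio $(1-\alpha)^K(N-n_{\min})^K/(N-(1+\alpha)n_{\min})^K$ and makes no claim about Gaussian elimination. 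Your opening move --- ``invoke \ref{thm:bin_yu_thm2} to get the cone structure'' --- is therefore circular with respect to the statement you were asked to prove: the entire content of the theorem (establishing the cone structure, the $\phi_n(\delta)$-dependent bounds on $\alpha$ and $|\cos\theta|$, and the exponential failure probability) is precisely what you take as a black box, and nothing in your proposal supplies it. A genuine proof would have to carry out the spectral-clustering analysis of \citet{schiebinger2015geometry}: compare the empirical Laplacian eigenvector embedding to the population embedding spanned by the square-root kernelized densities $q_m$ (this is where the difficulty function $\phi_n(\delta)$ and a perturbation bound of Davis--Kahan type enter), control the points where $q_m$ is small via the tail parameter $\psi(2t)$, and prove a concentration inequality for the empirical operator yielding the $\exp\bigl(-c_2 n\delta^4/(\delta^2+S_{\max}+C)\bigr)$ term. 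None of this machinery appears in your argument.

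What you have actually written is, in substance, the paper's proof of \Cref{thm:consistency_general}: condition on the cone structure furnished by \Cref{thm:bin_yu_thm2}, argue each GE pivot is ``good'' with conditional probability of the form $(1-\alpha)n_{-i}/(n-(1-\alpha)n_{+i})$, and multiply the per-step bounds into the combinatorial ratio. That reconstruction tracks the argument in \Cref{app:convergence_proof} nearly step for step (including the worst-casing by $n_{\min}$), so you have produced the right proof for the wrong statement. Note that the paper itself offers no proof of \Cref{thm:bin_yu_thm2}: it is reproduced verbatim as an external result, and \Cref{app:elements_from_schiebinger} only collects the definitions ($\phi_n$, $\psi$, $S_{\max}$, $\mathcal{C}$, the cone-structure parameters) needed to state it --- a further indication that the target of your proof was misidentified.
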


    \Cref{thm:bin_yu_thm2} elucidates the conditions under which the spectral embeddings of the nodes $\bbomega$ form an orthogonal cone structure (see \cite[Def. 1]{schiebinger2015geometry} for a precise definition). This is helpful for Gaussian elimination, as provided that we pick a pivot inside a cone, the other rows to be picked---which are orthogonal to the pivot---are themselves inside other cones, and therefore likely to belong to a different cluster (i.e., to be distributed according to a different mixture component). 

    We first recall connections between graphons and mixture models and explain how each objects in the context of \Cref{thm:bin_yu_thm2} can be understood in graphons terms. In mixture model, we sample dataset $\{\omega_i\}_{i =1}^n$ from the mixture distribution. This is equivalent to sampling nodes under a pushforward in when we sample finite graphs from a graphon. Thus, each data point $\omega_i$ is a `node' of a finite graph sampled from the graphon. Next, the spectral embedding of datapoints in spectral clustering is equivalent to computing the eigenfunction of graphon Laplacian at that datapoint - embedding it in frequency domain. Therefore, from a graphon perspective, the theorem is asserting that given some underlying structure controlled by the difficulty function, embedding of nodes in finite graph from a fix graphon into frequency domain under GFT has a peculiar structure: an orthogonal cone. While we do not have easy access to graphon eigenfunction, computing an approximation once with a large graph suffices. This is because we can reuse the embedding when new points are sampled into the graph!

\begin{proof}[Proof of \Cref{thm:consistency_general}]
    Let us consider what happens when performing Gaussian elimination on the columns of $\bbPhi_{\ccalV}(\bbomega)$. When picking the pivot, the probability of picking a ``good'' point inside a cone, i.e., a point that is both inside a cone and that is distributed according to the mixture component associated with that cone, is $1-\alpha$. Conditioned on this event, the probability of picking a second ``good'' point from another cone is $\smash{\frac{(1-\alpha)(n-n_1)}{n-(1-\alpha)n_1}}$, where $n_1$ is the number of points distributed according to the pivot's distribution, denoted $\mbP_1$. More generally, the probability of picking a ``good'' point at the $i$th step, conditioned on having picked $i-1$ ``good'' points, is
    \begin{equation} \label{eqn:proba_good_points}
    \mbP(i\mbox{th point is ``good''}\ |\ 1,\dots,i-1 \mbox{ are ``good''}) = \frac{(1-\alpha)n_{-i}}{n-(1-\alpha)n_{+i}}
    \end{equation}
    where $n_{-i} = \sum_{j=1}^{i-1} n - n_j$ and $n_{+i} = n - n_{-i}$.

    Since \Cref{eqn:proba_good_points} is a decreasing function of $n_{-i}$, the probability of picking $K$ good points is lower bounded by
    \begin{equation} \label{eqn:proba_all_good}
     \mbP(1,\dots,K \mbox{ are ``good''}) \geq \frac{(1-\alpha)^K(n-n_{\mbox{\scriptsize min}})^K}{(n-(1-\alpha)n_{\mbox{\scriptsize min}})^K}
    \end{equation}
    where $n_{\mbox{\scriptsize min}} = \min_{1 \leq j \leq K}{n_j}$. Combining \Cref{eqn:proba_all_good} with Theorem \Cref{thm:bin_yu_thm2} gives the proposition's result.
\end{proof}

\begin{proof}[Proof of \Cref{prop:sampled_points_are_uniqueness_sets}]
    The conditions on the difficulty function in the hypothesis of \Cref{prop:sampled_points_are_uniqueness_sets} means that the angle $\theta$ in the cone structure is at least $\pi/3$. 

    Note that every finite graph $\bbG_n$ induces a graphon via stochastic block model: \begin{equation}
        \bbW_{\bbG_n} := \sum_{i = 1}^n \sum_{j = 1}^n [\bbA_n]_{i,j} \mbI(x \in I_i) \mbI(y \in I_j)
    \end{equation}
    
    From \citet{ruiz2020graphonsp}, we know that the eigenvalues of the adjacency HS operator of $\bbW_{\bbG_n}$ converges to that of $\bbW$. As the graphon Laplacian is a scaled and translated operator from the adjacency operator, eigenvalues of the Laplacian also converges. Let the eigenvalues of the finite graph be $\hat{\lambda}_{n,1} \leq \ldots \leq \hat{\lambda}_{n,-1} $ Pick an $n_0$ large enough such that there is a spectral gap $\hat{\lambda}_{n,K} < \hat{\lambda}_{n,K+1}$ for all $n > n_0$. Then pick an even larger $n_1$ such that $\lambda \in (\hat{\lambda}_{n,K}, \hat{\lambda}_{n,K+1})$ for all $n > n_1$. Such a choice of $n_0,n_1$ is guaranteed by convergence of eigenvalue.

     Not only do eigenvalue converges, when there is an eigengap, the subspace spanned by the first $K$ eigenfunctions also converges. The convergence is in term of convergence in operator norm of the projection operator \citep{ruiz2020graphonsp}. Let the projection operator be $\Phi_{\bbG_n}$ and $\Phi_{\bbW}$, corresponding to that for the finite graph $\bbG_n$ and for the graphon $\bbW$ respectively. However, since both of these operators are Hilbert-Schmidt, convergence in operator norm is equivalent to convergence in Hilbert-Schmidt norm. Therefore, we select yet a larger $n_2$ such that $\|\Phi_{\bbG_n} - \Phi_{\bbW}\|_{HS} < \eps$ for all $n > n_2$ 
and for some $\eps$ to be chosen later.


Recall that we picked some sample via \Cref{thm:consistency_general} and with high probability, our sample attains an orthogonal cone structure. In other words, there is a permutation of samples such that for each $i \in [K]$, $|\cos \tau(i)| > 1/2$ with high probability, where $\tau(i)$ is the angle between $\varphi(x_i)$ and the unit vector with all zero entries but the $i$-th one. This means that for any $i$, $|\varphi_i(x_i)| / \|\varphi_i(x_i)\|_2 > 1/2$. Therefore, the matrix:
\begin{equation}\label{eq:finite_graph_full_rank}
    \begin{bmatrix}
        \varphi_1(x_1) & \varphi_2(x_1) & \ldots & {\varphi}_K(x_1) \\
        {\varphi}_1(x_2) & {\varphi}_2(x_2) & \ldots & {\varphi}_K(x_2) \\
        \vdots & \vdots & \ldots & \vdots\\
        {\varphi}_1(x_K) & {\varphi}_2(x_K) & \ldots & {\varphi}_K(x_K) \\
    \end{bmatrix}
\end{equation}
is full rank, since the off-diagonal absolute value sum does not exceed the absolute value of the diagonal entry for every row, via Gershgorin circle theorem. As a corollary from \Cref{thm:anis_uniqueness_set} of \citet{anis2016efficient}, the system being full rank means that the samples drawn form a uniqueness set and the proof is complete. 

To select $\eps$, notice that there are still slack in Gershgorin circle theorem and one can select such an $\eps$ that the two projection has eigenfunctions differs by at most that slack amount in $L^2$. This is possible since full-ranked-ness is a robust property: if a matrix is full-rank then other matrices within a small ball from it is also full-rank. Thus, if there is a converging sequence of eigenfunction/eigenspace to $\varphi(x)$ then the perturbed matrix analogous to \Cref{eq:finite_graph_full_rank} would eventually enter the small ball of full-rank matrices. We leave more precise nonasymptotic analysis to future work.  


\end{proof}

\section{Small example: Block model and mixture of disjoint uniform distributions}\label{app:small_eg}
Let us consider a simplified setting, consisting of a blockmodel kernel and uniform mixture components, to show an example where Gaussian elimination recovers intervals distributed according to the $\{q_i\}_{i = 1}^K$.

\begin{proposition}\label{thm:consistency_small}
 Let $\ccalI = \Omega_1 \cup \ldots \cup \Omega_N$ be an $N$-partition of $\Omega$. Let the kernel $\bbk$ be a $K$-block model over a coarser partition $\ccalI' = \Omega_1' \cup \ldots \cup \Omega_K'$ of $\Omega$ containing $\ccalI$ (each block has value given by the integral of $K$ over the centroid). Let the $\mbP_i$ be uniform over the $\Omega_i'$. Then, column-wise Gaussian elimination over the positive eigenfunctions(vectors) finds subsets $\Omega_{j_1}, \ldots, \Omega_{j_K}$ distributed with probability density functions equal to the corresponding $q_i$, up to a normalization.
\end{proposition}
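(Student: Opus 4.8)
The plan is to exploit the block structure to collapse the infinite-dimensional spectral problem to a $K \times K$ one, and then to read off the behaviour of Gaussian elimination directly from the resulting piecewise-constant eigenvectors. Write $\Omega_i'$ for the coarse blocks, $w_i := \mbP(\Omega_i')$ for their masses, $a(\omega)\in[K]$ for the index of the coarse block containing $\omega$, and $B_{ab}$ for the constant value of $\bbk$ on $\Omega_a' \times \Omega_b'$. First I would observe that every object built from $\bbk$ and $\mbP$ inherits this block-constancy: the degree $\bbd(\omega) = \int \bbk(\omega,\theta)\,\D\mbP(\theta)$ depends only on $a(\omega)$, so the normalized kernel $\overline{\bbk}(\omega,\theta) = \bbk(\omega,\theta)/\sqrt{\bbd(\omega)\bbd(\theta)}$ is again constant on each $\Omega_a'\times\Omega_b'$, with value $\overline{B}_{ab} = B_{ab}/\sqrt{\bbd_a\bbd_b}$.

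The key structural step is that the operator $\overline{H}f = \int \overline{\bbk}(\omega,\cdot) f(\omega)\,\D\mbP(\omega)$ maps every $f$ into a function constant on coarse blocks and annihilates every $f$ with zero $\mbP$-average on each block (since $\int \overline{\bbk}(\omega,\theta)f(\omega)\D\mbP(\omega) = \sum_a \overline{B}_{a\,b(\theta)}\int_{\Omega_a'} f\,\D\mbP$). Hence $\overline{H}$, and therefore $\ccalL_K = \Id - \overline{H}$, has rank at most $K$ and all its eigenfunctions with nonzero eigenvalue are piecewise constant on $\ccalI'$. Representing $\varphi_j$ by its block-value vector $(\varphi_j^{(1)},\dots,\varphi_j^{(K)})$ and forming $\Phi = [\varphi_j^{(i)}]_{i,j}$, the spectral problem reduces to the $K\times K$ matrix $\mathrm{diag}(w)^{1/2}[\overline{B}_{ab}]\,\mathrm{diag}(w)^{1/2}$ — this is exactly the promised ``order $K$'' eigenproblem. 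Assuming $B$ (equivalently this reduced matrix) has full rank $K$, there are exactly $K$ positive eigenfunctions, and orthonormality of $\{\varphi_j\}$ in $L^2(\mbP)$ reads $\sum_i w_i\varphi_j^{(i)}\varphi_{j'}^{(i)} = \delta_{jj'}$, i.e. $\Phi^{\top}\mathrm{diag}(w)\Phi = I_K$, so $\det(\Phi)\neq 0$ and $\Phi$ is invertible.

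Next I would analyze column-wise Gaussian elimination on the eigenvector matrix, whose rows are indexed by the fine blocks $\Omega_1,\dots,\Omega_N$ (or by sampled nodes) and whose columns are $\varphi_1,\dots,\varphi_K$. Since each $\varphi_j$ is constant on coarse blocks, all rows lying inside a single $\Omega_i'$ are identical and equal to the $i$-th row of $\Phi$; once pivoting selects one of them and eliminates that column, the remaining rows of $\Omega_i'$ become zero, so the next pivot must come from a different coarse block. As $\Phi$ is nonsingular, exactly $K$ pivots are produced, one representative fine block $\Omega_{j_i}\subseteq\Omega_i'$ per coarse block after relabeling. Finally I would identify each recovered block with the frame $q_i$: because $\bbk$ is constant in its first argument over $\Omega_i'$ and $\mbP_i$ is uniform there, uniform sampling on $\Omega_{j_i}$ induces the kernelized density $\int_{\Omega_{j_i}} \bbk(\omega,\theta)\,\D\omega\,/\,|\Omega_{j_i}| = B_{i,b(\theta)} = q_i(\theta)$, so the recovered subset reproduces $q_i$ up to normalization.

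The main obstacle is less the algebra than pinning two points down rigorously. First, the phrase ``distributed with pdf $q_i$'' must be fixed — I read it as the kernelized density induced by the recovered block equalling $q_i$ up to scale, which block-constancy makes an exact equality — and one should flag the genericity hypothesis that $B$ has full rank $K$ (otherwise there are fewer than $K$ positive eigenvalues and the statement must be read with $K$ = number of nonzero eigenvalues). Second, I would handle tie-breaking in the pivoting (all rows within a block being exactly equal, any consistent rule selects a representative) and, if one insists on running GE on the genuinely sampled finite graph rather than on the exact block operator, control the perturbation of $\Phi$ away from exact block-constancy; here I would invoke the eigenvalue and eigenspace convergence already used in \Cref{prop:sampled_points_are_uniqueness_sets}, together with robustness of full rank under small perturbations via Gershgorin, so that for large $n$ the sampled eigenvectors remain in a neighbourhood where the pivot structure is preserved with the high probability furnished by \Cref{thm:consistency_general}.
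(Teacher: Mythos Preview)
Your proposal is correct and follows essentially the same route as the paper: both exploit block-constancy to collapse the spectral problem to a $K\times K$ matrix, observe that the eigenvectors are piecewise constant on the coarse blocks, and conclude that columnwise Gaussian elimination must select exactly one representative fine block from each $\Omega_i'$ because identical rows collapse after a pivot. Your write-up is slightly more operator-theoretic (working with $\overline{H}$ directly rather than the explicit $N\times N$ adjacency), and you are in fact more explicit than the paper about why the reduced matrix $\Phi$ is invertible, via the orthonormality identity $\Phi^\top\mathrm{diag}(w)\Phi = I_K$; the paper simply asserts the rows are linearly independent. The only notable divergence is in the final clause: the paper interprets ``distributed with pdf $q_i$'' probabilistically (the chance of GE landing on a given fine block in $\Omega_i'$, together with the uniform $\mbP_i$, reproduces $q_i$ up to normalization), whereas you read it as the kernelized density induced by the recovered block --- but you explicitly flag this ambiguity, and both readings are defensible given the statement's wording.
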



\begin{proof}[Proof of \Cref{thm:consistency_small}]
    The kernel $\bbk$ can be written as
    \begin{equation}
        \bbk(\omega,\theta) = \sum_{i,j=1}^K a_{ij}  \mbI(\omega \in \Omega_i')\mbI(\theta \in \Omega_j') \text{.}
    \end{equation}
    Therefore, the model $\mbK$ can be represented as a SBM graphon,
    \begin{equation}
    \bbA = \bordermatrix{ 
                & i^1_{1} & \ldots & i^1_{k_1} & & \ldots & & i_1^K  & \ldots & i^K_{k_K}  \cr
      i^1_{1}   & a_{11}  & \ldots & a_{11}    & & \ldots & & a_{1K} & \ldots & a_{1K}     \cr
      \vdots    & \vdots  & \ddots & \vdots    & &        & & \vdots & \ddots & \vdots     \cr
      i^1_{k_1} & a_{11}  & \ldots & a_{11}    & & \ldots & & a_{1K} & \ldots & a_{1K}     \cr
                &         &        &           & &        & &        &        &            \cr
      \vdots    &         & \vdots &           & & \ddots & &        & \vdots &            \cr
                &         &        &           & &        & &        &        &            \cr
      i^K_{1}   & a_{1K}  & \ldots & a_{1K}    & & \ldots & & a_{KK} & \ldots & a_{KK}     \cr
      \vdots    & \vdots  & \ddots & \vdots    & &        & & \vdots & \ddots & \vdots     \cr
      i^K_{k_K} & a_{1K}  & \ldots & a_{1K}    & & \ldots & & a_{KK} & \ldots & a_{KK}     
      } 
    \end{equation}
    where $i^l_{j_l}$, $1 \leq j_l \leq k_l$, indexes elements of $\ccalI$ contained in $\Omega_l'$ (i.e., in the support of $\mbP_l$), and $\sum_{j=1}^K k_j = N$.
    For a more concise representation, let us write
    \begin{equation}
    \bbA = 
    \begin{bmatrix}
    \bbA_{11} & \ldots & \bbA_{1K} \\
    \vdots & \ddots & \vdots \\
    \bbA_{1K} & \ldots & \bbA_{KK}
    \end{bmatrix}
    \end{equation}
    where $\bbA_{ij} = a_{ij}\boldsymbol{1}\boldsymbol{1}^T$.
    
    Consider the normalized adjacency $\tbA=(\bbD^{\dag})^{1/2}\bbA(\bbD^{\dag})^{1/2}$, which has the same block structure as $\bbA$ but with blocks $\tbA_{ij}$. Note that technically, we would find eigenvectors of the normalized Laplacian $\bbI - \tbA$ but the identity shift only shifts the spectrum by $1$ (after inversion about the origin). Therefore it is equivalent to finding the eigenvectors of $\tbA$: 
    \begin{equation} \label{eqn:system}
    \begin{bmatrix}
    \tbA_{11} & \ldots & \tbA_{1K} \\
    \vdots & \ddots & \vdots \\
    \tbA_{1K} & \ldots & \tbA_{KK}
    \end{bmatrix}
    \bbu
    = \lambda \bbu \text{.}
    \end{equation} 
   Note, however, that for each $1 \leq i \leq K$, the rows corresponding to $[\tbA_{1i} \ldots \tbA_{Ki}] \bbu$ are repeated, so plugging $\tbA$ into an eigensolver without  simplifying $\tbA$ first is going to incur huge computational cost for little gain. We can exploit the repeated \emph{structure} of $\tbA$ to do some preprocessing first, via a variant of Gaussian elimination. Permuting the rows and columns of this matrix to ensure the sequence $a_{i1}, \ldots, a_{iK}$ appears in the first $K$ columns, and subtracting the repeated rows, we can rewrite this as
    \begin{equation} \label{eqn:modified_system}
    \begin{bmatrix}
    \tilde{a}_{11}\ & \ldots & \tilde{a}_{1K} & \bbb_1 \\
    \vdots & \ddots & \vdots & \vdots \\
    \tilde{a}_{1K} & \ldots & \tilde{a}_{KK} & \bbb_K\\
    \boldsymbol{0} & \ldots & \boldsymbol{0} & \boldsymbol{0}
    \end{bmatrix}
    \bbu
    = \lambda \bbu 
    \end{equation} 
    where the $\bbb_i \in \reals^{N-K}$ are row vectors collecting the remaining entries of row $i$ after permutation, and $\boldsymbol{0}$ denotes the all-zeros vector of dimension ${N-K}$. 
    
    For the linear system in \Cref{eqn:modified_system}, it is easy to see that the solutions $\bbu$ must have form $\bbu = [u_1 \ldots u_k\ 0 \ldots 0]^T$. Hence, the eigenvectors of the modified matrix in \Cref{eqn:modified_system} are the eigenvectors of its $K \times K$ principal submatrix padded with zeros. To obtain the eigenvectors of the original matrix \Cref{eqn:system}, we simply have to ``revert'' the operations performed to get from there to \Cref{eqn:modified_system}, with the appropriate normalizations to ensure orthonormality. By doing so, we get eigenvectors of the following form
    \begin{equation}
        \bbu = \quad\bordermatrix{
         & \cr
         & u_1\cr
         k_1 \mbox{ times} & \vdots\cr
         & u_1\cr
         &\cr
         &\vdots\cr
         & \cr
         & u_K \cr
         k_K \mbox{ times} &\vdots \cr
         & u_K \cr
        }
    \end{equation}
    i.e., in every eigenvector of $\tbA$, entries corresponding to sets $\Omega_i$ contained in the same set $\Omega'_k$ are the same.

    Now, assume that we have found all $K$ eigenvectors of $\tbA$ and collect them in the matrix $\bbU_K \in \reals^{N \times K}$. To find a uniqueness set for the associated graphon, we perform columnwise Gaussian elimination on $\bbU_K$, and add the indices of the zeros in the $K$th row of the echelon form to the sampling set. 
    
    In the current example, this heuristic is always guaranteed to find a uniqueness set. Any combination of indices corresponding to $K$ different rows from $\bbU_K$ forms such a set. Since through Gaussian elimination we are guaranteed to pick $K$ linearly independent rows, when picking a row from cluster $\Omega_i$ for arbitrary $i$, all $k_i$ rows are equally likely to be picked, as they are equal and thus have the same ``pivoting'' effect. In an independent trial, the probability of picking a row from $\Omega_i'$ is thus $(k_i/N) \times \mbP_i$. Up to a normalization, this probability is equal to $\bbq_i = \bbA\mbP_i$. The entries of this vector determine the level sets of $q_i$ as
    \begin{equation}
        q_i(x) = \bbq_i \mbI(x \in \Omega_i')
    \end{equation}
    completing the proof.
\end{proof}

\section{Elements from \citep{schiebinger2015geometry}}\label{app:elements_from_schiebinger}
For completeness, we reproduce elements from \citep{schiebinger2015geometry} that were used in our paper.

\subsection{Difficulty function for mixture models}
Recall $\Omega$ is a measurable space and $\ccalP(\Omega)$ is a set of all probability measures on $\Omega$. Let $\mbP_i \in \ccalP(\Omega)$ mixture components for $i = 1..K$. A mixture model is a convex combination:
\begin{equation}
    \mbP := \sum_{i = 1}^K w_i \mbP_i,
\end{equation}
for a set of weights $w_i \geq 0$ for $i = 1 \ldots K$ and $\sum_i w_i = 1$. Recall that there is also a kernel $\bbk$ associated with the mixture model.

The statistics of how well-separated the mixture components are can be quantified through five defined quantities:

\paragraph{Similarity index.} For any distinct pair of mixtures $l \neq k$, the kernel-dependent similarity index between $\mbP_l$ and $\mbP_k$ is:
\begin{equation}
    \mathcal{S}(\mbP_l, \mbP_k) := \frac{\int_{\Omega} \int_{\Omega} \bbk(\omega,\theta) \D \mbP_l(\omega) \D \mbP_l(\theta)}{\int_{\Omega} \int_{\Omega} \bbk(\omega,\theta) \D \mbP(\omega) \D \mbP_l(\theta)},
\end{equation}
and the maximum over all ordered pairs of similarity index is: 
\begin{equation}
    \mathcal{S}_{\max}(\mbP) := \max_{l \neq k} \mathcal{S}(\mbP_l, \mbP_k)
\end{equation}

In general, $\mathcal{S}_{\max}$ measures the worst overlap between any two components with respect to the kernel $\bbk$.

\paragraph{Coupling parameter.} 
The coupling parameter is defined as:
\begin{equation}
    \mathcal{C}(\mbP) := \max_{m} \left\| \frac{\bbk(\omega,\theta)}{q_m(\omega) q_m(\theta)} -w_m \frac{\bbk(\omega,\theta)}{q(\omega) q(\theta)} \right\|_{\mbP_m \otimes \mbP_m}^2,
\end{equation}
where $q(\theta) = \sqrt{\int \bbk(\omega,\theta) \D \mbP(\omega)}$ and $q_m(\theta) = \sqrt{\int \bbk(\omega,\theta) \D \mbP_m(\omega)}$. It measures the coupling of function spaces over $\mbP_2$ with respect to the Laplacian operator. When it is $0$, for instance, the Laplacian over $\mbP$ is the weighted sum of Laplacians over $\mbP_m$ with weights $w_m$.

\paragraph{Indivisibility parameter.}
The indivisibility of a probability measure is defined as:
\begin{equation}
    \Gamma(\mbQ) := \inf_{S \subset \Omega} \frac{p(\Omega) \int_S \int_{S^c} \bbk(\omega,\theta) \D \mbQ(\omega) \D \mbQ(\theta)}{p(S) p(S^c)},
\end{equation}
where $p(S) := \int_S \int_\Omega \bbk(\omega,\theta) \D \mbQ(\omega) \D \mbQ(\theta)$.

And $\Gamma_{\min}(\mbP) := \min_m \Gamma(\mbP_m)$ measures how easy it is to split a single component into two which is suggestive of ill-fittedness of the current model.

\paragraph{Boundedness parameter.} Finally, we define:
\begin{equation}
    b_{\max} := \max_m \left\|\frac{\bbk(\cdot,\theta)}{q_m(\cdot) q_m(\theta)}\D \mbP_m(\theta)\right\|_\infty^2.
\end{equation}
This is just a constant when the kernel is bounded.

\paragraph{The difficulty function.} With these parameters set up, we can now define the difficulty function used in \Cref{prop:hs_norm_difficult_function}:
\begin{equation}
    \phi(\mbP, \bbk) := \frac{\sqrt{K(\mathcal{S}_{\max}(\mbP) + \mathcal{C}(\mbP))}}{\min_m w_m \Gamma^2_{\min}(\mbP)}.
\end{equation}

\subsection{Finite-sample cone structure elements}
To get Theorem 2 from \citep{schiebinger2015geometry}, we require additional concepts and notations. For two vectors $u,v$ in $\mathbb{R}^K$, we define the angle between them $\text{angle}(u,v) := \arccos\frac{\langle u, v \rangle}{\|u\|\|v\|}$. An orthogonal cone structure $OSC$ with parameter $\alpha, \theta$ is an embedding of $n$ points $\{(X_i \in \R^n, Z_i \in [K])\}_{i \in [n]}$ into $\R^K$ such that for each $m \in [K]$,  we can find a subset $S_m$ with at least a $(1 - \alpha)$ proportion of all points with $Z_i = m$ where any $K$ points taken from each one of these subsets have pairwise angle at least $\theta$.

In the derivation of \Cref{thm:bin_yu_thm2}, \citet{schiebinger2015geometry} also let $b$ be such that $\bbk \in (0,b)$, and $r$ be such that $q_m(X^m) \geq r > 0$ with probability $1$. $c_0, c_1,\ldots$ are then other constant that depends only on $b$ and $r$.

In conjunction with  other works on the topic, they also defined a tail decay parameter: 
\begin{equation}
    \psi(t) := \sum_{m = 1}^K \mbP_m \left[\frac{q^2_m(X)}{\|q_m\|_{\mbP}^2} < t\right]
\end{equation}
and an extra requirement and the difficulty function: that there exists a $\delta > 0$ such that:
\begin{equation}\label{eqn:extra_difficulty_assumption}
    \phi(\mbP;K) + \frac{1}{\Gamma_{\min}^2(\mbP)} \left(\frac{1}{\sqrt{n}} + \delta\right) \leq c \Gamma^2_{\min} (\mbP).
\end{equation}
In words, it means that the indivisibility parameter of the mixture model is not too small relative to the clustering function. Finally, in the statement of \Cref{thm:bin_yu_thm2}, the difficulty parameter is reparameterized as the left hand side of \Cref{eqn:extra_difficulty_assumption}:
\begin{equation}
    \phi_n(t) := \phi(\mbP;k) + \frac{1}{\Gamma^2_{\min}(\mbP)} \left(\frac{1}{\sqrt{n}} + \delta\right),
\end{equation}
where $n$ is the number of points in the dataset.


\section{Additional Experiment Details} \label{app:exps}

\begin{table}[t]
\caption{Citation network details.}
\centering
\begin{tabular}{lcccc}
\hline
         & Nodes ($N$) & Edges & Features & Classes ($C$)  \\
\hline
Cora     & 2708  & 10556 & 1433     & 7       \\
CiteSeer & 3327  & 9104  & 3703     & 6       \\
PubMed   & 19717 & 88648 & 500      & 3       \\
\hline
\end{tabular}
\label{tab:stats}
\end{table}

All the code for the numerical experiments was written using the PyTorch and PyTorch Geometric libraries. The first set of experiments was run on an Intel i7 CPU, and the second set on an NVIDIA A6000 GPU.

\textbf{Transferability for node classification.} The details of the citation network datasets used in this experiment are displayed in Table \ref{tab:stats}. To perform graphon sampling, the nodes in these networks were sorted by degree. We considered a 60-20-20 training-validation-test random split of the data for each realization. In all scenarios, we trained a GNN consisting of a 2-layer GCN with embedding dimension 32 and ReLU nonlinearity, and 1 readout layer followed by softmax. We minimized the negative log-likelihood using ADAM with learning rate 0.001 and default forgetting factors over 100 training epochs.

\textbf{Positional encodings for graph classification.} We anonymized the MalNet-Tiny dataset by removing the node features and replacing them with the all-ones signal. Since we focus on large graphs, we further removed any graphs with less than 4500 nodes. This brought the number of classes down to 4, and we additionally removed samples from certain classes at random to balance the class sizes, yielding a dataset with 216 graphs in total (54 per class). We considered a 60-20-20 random split of the data for each realization. In all scenarios, we trained a GNN consisting of a 4-layer GCN with embedding dimension 64 and ReLU nonlinearity, and 1 readout layer with mean aggregation followed by softmax. We minimized the negative log-likelihood using ADAM with batch size 8, learning rate 0.001 and default forgetting factors over 150 training epochs. The PEs are the 10 first normalized Laplacian eigenvectors, and to obtain the graphon-sampled subgraph, we fix $\lambda=\lambda_{10}$, $q=20$, $p=10$, $2$ communities, and $r=10$.

\end{document}